\documentclass[11pt,twoside]{article}

\usepackage[
  letterpaper,
  left=0.90in,
  right=0.90in,
  top=0.6in,
  bottom=0.84in,
  headheight=14pt,
  headsep=0.30in,
  footskip=0.34in,
  includeheadfoot
]{geometry}
\usepackage[T1]{fontenc}
\usepackage{amsmath}
\usepackage{amsthm}
\usepackage{newtxtext}
\usepackage{newtxmath}
\usepackage{microtype}
\usepackage{setspace}
\usepackage{mathtools}
\usepackage{bm}
\allowdisplaybreaks

\usepackage{graphicx}
\usepackage{booktabs}
\usepackage{multirow}
\usepackage{array}
\usepackage{tabularx}
\usepackage{float}
\usepackage[font=small,labelfont=bf]{caption}
\usepackage{algorithm}
\usepackage{algpseudocode}

\usepackage{tikz}
\usetikzlibrary{
  positioning,
  arrows.meta,
  fit,
  backgrounds,
  calc,
  shapes.geometric
}

\usepackage{enumitem}
\usepackage[round,authoryear]{natbib}
\usepackage{xcolor}
\usepackage{hyperref}
\usepackage{titlesec}

\titlespacing*{\section}
{0pt}{3.0ex plus 0.8ex minus 0.2ex}{0pt}
\hypersetup{
  colorlinks=true,
  linkcolor=blue,
  citecolor=blue,
  urlcolor=black,
  pdftitle={Two-Timescale Hierarchical Reinforcement Learning for Resilient Operations},
  pdfauthor={Young Hyun Cho, Franz Stoll, Will Wei Sun, Guang Lin, Stephan Biller}
}
\usepackage{fancyhdr}
\fancypagestyle{paperstyle}{%
  \fancyhf{}
  \fancyhead[LE]{\small Cho et al.: Two-Timescale HRL for Resilient Operations}
  \fancyhead[RO]{\small Cho et al.: Two-Timescale HRL for Resilient Operations}
  \fancyfoot[C]{\small\thepage}

}
\fancypagestyle{firstpagestyle}{%
  \fancyhf{}
  \fancyfoot[C]{\small\thepage}

}
\newtheorem{theorem}{Theorem}

\theoremstyle{definition}
\newtheorem{assumption}{Assumption}

\theoremstyle{remark}
\newtheorem{remark}{Remark}

\newcommand{\E}{\mathbb{E}}

\newcommand{\Gap}{\mathcal G}

\newcommand{\WKL}[1]{\mathrm{KL}_{#1}}

\newcommand{\supplementaryname}{Supplementary Material of}
\newcommand{\beginsupplementary}{%
  \clearpage
  \setcounter{section}{0}
  \setcounter{subsection}{0}
  \setcounter{equation}{0}
  \setcounter{figure}{0}
  \setcounter{table}{0}
  \setcounter{theorem}{0}
  \setcounter{proposition}{0}
  \setcounter{lemma}{0}
  \setcounter{corollary}{0}
  \setcounter{assumption}{0}
  \setcounter{definition}{0}
  \setcounter{remark}{0}
  \renewcommand{\thesection}{S\arabic{section}}
  \renewcommand{\thesubsection}{\thesection.\arabic{subsection}}
  \renewcommand{\thesubsubsection}{\thesubsection.\arabic{subsubsection}}
  \renewcommand{\theequation}{S\arabic{equation}}
  \renewcommand{\thefigure}{S\arabic{figure}}
  \renewcommand{\thetable}{S\arabic{table}}
  \renewcommand{\thetheorem}{S\arabic{theorem}}
  \renewcommand{\theproposition}{S\arabic{proposition}}
  \renewcommand{\thelemma}{S\arabic{lemma}}
  \renewcommand{\thecorollary}{S\arabic{corollary}}
  \renewcommand{\theassumption}{S\arabic{assumption}}
  \renewcommand{\thedefinition}{S\arabic{definition}}
  \renewcommand{\theremark}{S\arabic{remark}}
  \renewcommand{\theHsection}{supp.section.\arabic{section}}
  \renewcommand{\theHsubsection}{supp.subsection.\arabic{section}.\arabic{subsection}}
  \renewcommand{\theHsubsubsection}{supp.subsubsection.\arabic{section}.\arabic{subsection}.\arabic{subsubsection}}
  \renewcommand{\theHequation}{supp.equation.\arabic{equation}}
  \renewcommand{\theHfigure}{supp.figure.\arabic{figure}}
  \renewcommand{\theHtable}{supp.table.\arabic{table}}
  \renewcommand{\theHtheorem}{supp.theorem.\arabic{theorem}}
  \renewcommand{\theHproposition}{supp.proposition.\arabic{proposition}}
  \renewcommand{\theHlemma}{supp.lemma.\arabic{lemma}}
  \renewcommand{\theHcorollary}{supp.corollary.\arabic{corollary}}
  \renewcommand{\theHassumption}{supp.assumption.\arabic{assumption}}
  \renewcommand{\theHdefinition}{supp.definition.\arabic{definition}}
  \renewcommand{\theHremark}{supp.remark.\arabic{remark}}
}

\begin{document}
\raggedbottom
\thispagestyle{firstpagestyle}

\begin{center}

{\fontsize{22.5}{27}\selectfont
Two-Timescale Hierarchical Reinforcement Learning\\[-0.05em]
for Resilient Operations\par}

\vspace{1.35em}

{\fontsize{13.5}{16}\selectfont Young Hyun Cho\textsuperscript{*}\par}
\vspace{0.08em}
{\fontsize{9.5}{11.5}\selectfont
Department of Statistics, Harvard University,
\href{mailto:younghyuncho@fas.harvard.edu}{younghyuncho@fas.harvard.edu}\par}

\vspace{0.62em}

{\fontsize{13.5}{16}\selectfont Franz Stoll\par}
\vspace{0.08em}
{\fontsize{9.5}{11.5}\selectfont
Edwardson School of Industrial Engineering, Purdue University,
\href{mailto:stollf@purdue.edu}{stollf@purdue.edu}\par}

\vspace{0.62em}

{\fontsize{13.5}{16}\selectfont Will Wei Sun\par}
\vspace{0.08em}
{\fontsize{9.5}{11.5}\selectfont
Mitch Daniels School of Business, Purdue University,
\href{mailto:sun244@purdue.edu}{sun244@purdue.edu}\par}

\vspace{0.62em}

{\fontsize{13.5}{16}\selectfont Guang Lin\par}
\vspace{0.08em}
{\fontsize{9.5}{11.5}\selectfont
Department of Mathematics and School of Mechanical Engineering,
Purdue University, \href{mailto:guanglin@purdue.edu}{guanglin@purdue.edu}\par}

\vspace{0.62em}

{\fontsize{13.5}{16}\selectfont Stephan Biller\par}
\vspace{0.08em}
\begin{minipage}{0.96\textwidth}
\centering
\fontsize{9.5}{11.5}\selectfont
Edwardson School of Industrial Engineering and Mitch Daniels School of Business, Purdue University;\\[-0.08em]
Dauch Center for the Management of Manufacturing Enterprises,
\href{mailto:sbiller@purdue.edu}{sbiller@purdue.edu}
\end{minipage}

\end{center}

\begingroup
\renewcommand{\thefootnote}{*}
\footnotetext{Most of this work was completed while Young Hyun Cho was at the
Department of Statistics, Purdue University.}
\endgroup


\begin{center}
\begin{minipage}{0.91\textwidth}
\small
\setstretch{1.05}
\setlength{\parindent}{1.45em}
\setlength{\parskip}{0.52pt}

\noindent\textbf{Abstract.}\enspace
Unexpected shocks recur in global operations, requiring decision rules that adapt as market and operating conditions change. Many operational systems also have hierarchical structures in which long-term and short-term decisions pursue a shared objective. We study how hierarchical reinforcement learning can strengthen resilience by adapting these interdependent rules jointly. We develop a two-timescale hierarchical reinforcement learning framework that adapts long-term and short-term policies at their respective time scales. Because the policies are interdependent, we synchronize their updates and prove, to our knowledge, the first convergence guarantees for coupled two-timescale learning. Over \(T\) periods, our policies' average gap from an optimal policy pair is \(O(T^{-1/2})\), improving to \(O(\log T/T)\) when poor decisions produce clearer profit losses. In a used-car case study, inventory replenishment is the long-term decision and customer-arrival pricing the short-term decision. Relative to the strongest partially adaptive benchmark, the framework increases mean profit by 9.2\% under joint demand–supply shocks and by 11.8\% under a prolonged shock scenario, while maintaining a more stable profit trajectory over time. Short-term adaptation addresses routine seasonality and one-sided disruptions by responding immediately to changing conditions. Under joint demand–supply shocks, however, it is insufficient alone; long-term adaptation is also needed to create favorable conditions for short-term decisions. Joint adaptation thus yields higher and more stable profits through disruption and recovery. Because many organizations already use hierarchical planning, the framework strengthens operational resilience without altering existing decision structures.

\vspace{1.0em}
\noindent\textit{Key words:}
Supply chain disruptions; supply chain resilience; multi-timescale decision
making; joint operational control; global convergence

\vspace{0.38em}
\noindent\rule{\linewidth}{0.65pt}
\end{minipage}
\end{center}

\clearpage
\pagestyle{paperstyle}

\IfFileExists{main_body_arxiv.tex}{%
  \input{main_body_arxiv}
}{%
\section{Introduction}\label{sec:Intro}
Unexpected operating shocks are now a recurring feature of global operations. Recent events, including the COVID-19 pandemic \citep{ivanov2020predicting}, the Suez Canal blockage \citep{tran2025costs}, climate-related capacity reductions at the Panama Canal \citep{jung2025panama}, and disruptions around the Strait of Hormuz \citep{unctad2026hormuz}, show how shocks can affect supply, logistics, and capacity. Therefore, decision rules optimized for ordinary operating conditions might perform poorly after conditions shift \citep{snyder2016or,katsaliaki2022supply}. After a shock, the problem is not only that the original rule no longer matches the operating condition. If a rule is repeatedly applied without updates, the mismatch can persist into later operating decisions. This can lead to stockouts, delays, lost sales, or inventory imbalances \citep{chopra2004supply,snyder2016or,katsaliaki2022supply}. As a result, a single unmitigated shock can have both short- and long-term effects on operations.

Resilient operations, therefore, require more than static plans for ordinary conditions \citep{chopra2004supply,kleindorfer2005managing}. Firms need decision rules that update as information arrives, while avoiding overreaction to transient noise \citep{christopher2004building,sheffi2005supply}. This balance is difficult because shocks can be temporary or persistent, and decisions made after a shock can affect later operating outcomes. The resulting problem is to learn decision rules that are stable under ordinary fluctuations and responsive when shocks alter operating conditions.

At the same time, many operating systems have a hierarchical decision structure. Long-term decisions affect operations over a longer horizon. Given a long-term decision, short-term decisions are made repeatedly during the operating period. The resulting period outcome reflects both the long-term and short-term decisions. In manufacturing and supply networks, long-term sourcing commitments interact with base-surge allocation and expedited production decisions \citep{tomlin2006value, allon2010global}. In cloud computing and service operations, capacity sizing operates alongside real-time admission control and pricing \citep{maglaras2003pricing}. In retail and revenue management, inventory replenishment is coupled with dynamic pricing \citep{gallego1994optimal, federgruen1999combined, elmaghraby2003dynamic, biller2005dynamic}. Across these settings, performance depends on how long-term and short-term decisions are combined. This interdependence motivates learning the two decision rules jointly rather than fixing either rule.

These considerations point to reinforcement learning (RL) as a natural tool for adaptive operating decisions. RL improves decision rules through repeated interaction with an operating system and performance feedback observed over time. In the hierarchical setting above, however, the learning problem is not to update a single decision rule. The firm must learn a long-term and a short-term decision rule that operate over different decision horizons and affect a shared performance objective. This motivates a hierarchical RL (HRL) formulation in which the long-term and short-term decision rules are modeled as cooperative policies. Figure~\ref{fig:framework_overview} illustrates this structure. The long-term policy selects long-term commitments that shape the state faced by short-term decisions. The short-term policy adapts within the period, and the resulting period outcome reflects both the long-term and short-term decisions. By updating both policies jointly, the two decision layers adapt together to shocks and other changes in operating conditions while working toward a shared objective.

\begin{figure}[t]
\centering
\begin{tikzpicture}[
    >=Stealth,
    every node/.style={font=\small},
    decision/.style={
        rectangle, draw=black, thick, rounded corners,
        fill=gray!10,
        minimum width=3.4cm,
        minimum height=1.05cm,
        align=center
    },
    operation/.style={
        rectangle, draw=black, thick,
        fill=white,
        minimum width=3.6cm,
        minimum height=1.05cm,
        align=center
    },
    loopbox/.style={
        rectangle, draw=black, dashed, thick,
        rounded corners,
        inner xsep=0.45cm,
        inner ysep=0.42cm
    },
    lab/.style={
        font=\footnotesize,
        fill=white,
        inner sep=1.2pt
    }
]

\node[decision]  (slow) at (0,0)
    {\textbf{Long-term decisions}\\Across operating periods};
\node[decision]  (fast) at (6.0,0)
    {\textbf{Short-term decisions}\\Within an operating period};
\node[operation] (env) at (6.0,-1.9)
    {\textbf{Operations}\\Demand and supply};

\begin{scope}[on background layer]
    \node[loopbox, fit=(fast) (env)] (loop) {};
\end{scope}

\node[font=\bfseries\small, fill=white, inner sep=2pt]
    at ([yshift=0.28cm]loop.north) {Short-term decision loop};

\draw[->, thick, shorten >=2pt, shorten <=2pt]
    (slow.east) -- (fast.west);

\draw[->, thick, shorten >=2pt, shorten <=2pt]
    (fast.south) -- (env.north);

\draw[->, thick, rounded corners=6pt, shorten >=2pt, shorten <=2pt]
    (env.east) -- ++(0.55,0) |- (fast.east);
\node[lab] at (8.55,-0.205) {Feedback};

\draw[->, thick, rounded corners=8pt, shorten >=2pt, shorten <=2pt]
    (env.south) -- ++(0,-0.55) -| (slow.south);
\node[lab] at (3.2,-2.65) {Period outcome};

\end{tikzpicture}
\caption{Two-timescale hierarchical decision structure. A long-term decision shapes the operating conditions for short-term decisions. Long-term and short-term decisions jointly determine the period outcome, which informs later long-term decisions.}
\label{fig:framework_overview}
\end{figure}

In this regard, we develop a two-timescale HRL framework for resilient operations. We implement the long-term and short-term policies using proximal policy optimization (PPO), a widely used policy-learning method for updating stochastic policies from sequential interaction data. Standard PPO analyses typically study a single policy update sequence, whereas our two policy updates are coupled because decisions under each policy affect those under the other. We address this coupling by synchronizing their update scales. Our convergence analysis shows that the average gap between our policies and an optimal policy pair is \(O(T^{-1/2})\), improving to \(O(\log T/T)\) when poor decisions produce steep profit losses that provide a clearer learning signal.

We apply the framework to a used-car inventory-pricing setting in which replenishment is the long-term decision and customer-arrival pricing is the short-term decision. We introduce demand and supply shocks separately and jointly and compare HRL with rule-based policies and methods that adapt only one decision layer. Short-term pricing adaptation produces much of the gain under regular operations and one-sided shocks. When demand and supply shocks occur together, however, adapting both decision layers becomes important. HRL improves mean profit over the strongest partially adaptive benchmark by 9.2\% under joint shocks and 11.8\% in the prolonged shock test. Beyond higher aggregate profit, HRL maintains a more stable profit path across operating periods, supporting resilient operations. Further analysis shows that joint learning creates its clearest value during recovery via higher selling prices, stronger inventory retention, and fewer lost sales.

\section{Related Literature and Contributions}

This section positions our work within the related literature and summarizes its contributions. Our work connects resilient operations and joint control, adaptive operational control through RL and HRL with PPO. We discuss these streams in turn, focusing on the need to learn long-term and short-term decision rules jointly for resilient operations. We then summarize the paper's contributions.

\noindent\textbf{Resilient operations and joint control.}
A large body of literature shows that supply disruptions might generate persistent performance losses \citep{hendricks2003effect, hendricks2005association, tang2006robust, tomlin2006value, snyder2016or}. The bullwhip effect literature further shows that demand signals and ordering responses can amplify variability across supply chains, thereby allowing local shocks and forecasting errors to propagate through upstream decisions \citep{lee1997information}. A related stream studies joint operational controls, especially the coupling of inventory, pricing, production, and capacity decisions \citep{federgruen1999combined, chen2003coordinating, elmaghraby2003dynamic, bitran2003overview, biller2005dynamic}. These literatures establish that resilience often depends on both long-term commitments and short-term responsive controls. They leave open how such decision rules can be learned jointly from operating feedback when shocks change the interaction between long-term decisions and short-term responses.

\noindent\textbf{Adaptive operational control.}
RL provides a general framework for improving decision rules through sequential feedback \citep{sutton1998reinforcement}. Recent operational applications include adaptive pricing, resource allocation, assortment optimization, and joint inventory--pricing decisions \citep{miao2021dynamic, perivier2022dynamic, liang2023online, luo2024distribution, chen2024optimal, xu2025joint, wang2025dynamic, kim2025dynamic, luo2026rate}. Multi-agent, multi-echelon, and cooperative learning formulations further highlight the value of coordination in operating systems \citep{swaminathan1998modeling, liu2022multi, leluc2023marlim, zheng2024dual, yang2025multi, kim2024multi, kotecha2025leveraging}. Recent work on foundation models and autonomous agents points to broader AI-enabled supply-chain decision architectures \citep{xu2024multi, menache2025generative, long2026reliability, jannelli2026agentic}. This line of literature shows the value of adaptive and cooperative decision making, but much of it studies one adaptive lever, one decision layer, or coordination across separate agents. Less is known about cooperative two-timescale learning in one operating system, where the long-term and short-term policies share a performance objective and affect the states, outcomes, and data used by each other.

\noindent\textbf{HRL and PPO.} The idea that operational decisions are organized across levels has roots in classical hierarchical production planning \citep{bitran1993hierarchical,hax2012hierarchical}. Modern HRL provides a data-driven language for temporal abstraction, where higher-level decisions shape lower-level control problems \citep{dietterich2000hierarchical, pateria2021hierarchical, barto2003recent}. On the other hand, PPO is a widely used policy-learning method for RL \citep{schulman2017proximal}. Recent theory studies single-policy or single-timescale PPO behavior \citep{shani2020adaptive, jin2023stationary, huang2024ppo}, while less is known about coupled two-timescale PPO learning. Providing convergence theory for this HRL setting is therefore a theoretical contribution in its own right. Motivated by these gaps, our contributions are summarized below:\\
\textbf{$\bullet$ A two-timescale HRL formulation for resilient operations.} As shocks become more frequent, firms need decision rules that adapt across operations. We formulate resilient operations under shocks as a two-timescale HRL problem that jointly learns long-term and short-term policies toward a shared objective. The formulation makes both policies adaptive at their respective decision scales. Because many businesses already organize these decisions hierarchically, the framework is compatible with existing decision structure.\\
\textbf{$\bullet$ Convergence guarantee.} Two-timescale learning raises a distinct difficulty beyond single-policy PPO: each policy interacts with the operating environment while changing the environment of the other. The long-term policy learns from end-of-period outcomes shaped by short-term decisions, while the short-term policy learns under conditions set by long-term decisions. We address this by synchronizing their update and prove, to our knowledge, the first convergence guarantees. Over \(T\) periods, our policies' average gap from an optimal policy pair is \(\mathcal O(T^{-1/2})\). Interestingly, when poor decisions produce steep profit losses under harsh market conditions, those losses strengthen the learning signal and improve this rate to \(\mathcal O(\log T/T)\).\\
\textbf{$\bullet$Implementation and resilient performance in used-car operations.} We implement the framework in a used-car inventory-pricing setting and compare it with rule-based and partially adaptive policies under no shocks and demand, supply, and joint shocks. The framework earns the highest mean profit even without shocks. When demand and supply shocks occur together, as in many real disruptions, it adapts across both decision layers, recovers most strongly, earns the highest profit, and maintains a more stable profit path. The implementation extends to other businesses with analogous decision hierarchies without changing their existing structures.

The remainder of this paper is organized as follows. Section~\ref{sec:setup} formalizes the two-timescale decision problem. Section~\ref{sec:Method} introduces our proposed method. Section~\ref{sec:convergence} develops the convergence analysis and its operational interpretation. Section~\ref{sec:case_study} provides managerial insights from the used-car inventory-pricing case study. Section~\ref{sec:conclusion} concludes the paper. All proofs, controlled numerical illustrations, and case-study implementation details are provided in the Supplementary.

\section{Problem Setup}\label{sec:setup}

We formulate a two-timescale operating problem through long-term and short-term policies. Operating periods are indexed by \(t\ge0\). During period \(t\), the long-term policy chooses a long-term action. Conditional on this long-term action, the system proceeds through \(K_t\) short-term interactions indexed by \(k=0,\ldots,K_t-1\). At each \(k\), the short-term policy chooses a short-term action. Long-term actions can represent replenishment or procurement, and short-term actions can represent pricing or assortment selection.

\subsection{Two-timescale Decision Process}

At the beginning of operating period \(t\), the system state is \(x_t=(I_t,Z_t)\in\mathcal X\). The component \(I_t\) records the physical operating state, such as inventory, pipeline orders, capacity, or backlog. The component \(Z_t\) records operating conditions that affect demand, supply, fulfillment, feasible actions, rewards, or transitions. We model \(Z_t\) as a Markov state component, following Markov-modulated operations models \citep{song1993inventory}. Shocks are represented through changes in \(Z_t\), which alter the operating environment faced by long-term and short-term decisions.

Given \(x_t\), the long-term policy \(\pi(\cdot\mid x_t)\) samples the long-term action \(u_t\in\mathcal U(x_t)\), which remains fixed throughout the operating period and shapes the environment for short-term decisions.

After \(u_t\) is chosen, the within-period state is initialized as \(s_{t,0}\sim P_{\mathrm{init}}(\cdot\mid x_t,u_t)\). We write \(s_{t,k}\) for the within-period Markov information state at short-term interaction \(k\). This state contains the within-period information used for operational decisions, such as remaining inventory, customer information, demand conditions, or fulfillment status. The short-term decision node is \((s_{t,k},u_t,k)\). The long-term action is included because short-term decisions are conditional on the current long-term action. The index \(k\) is included because rewards, feasible actions, and transitions may vary within an operating period.

At short-term interaction \(k=0,\ldots,K_t-1\), the short-term policy selects \(a_{t,k}\sim \phi(\cdot\mid s_{t,k},u_t,k)\), with \(a_{t,k}\in\mathcal A(s_{t,k},u_t,k)\). The system receives reward \(r_{t,k}=r_k(s_{t,k},a_{t,k},u_t)\), and the within-period state evolves according to \(s_{t,k+1}\sim P_{\mathrm f,k}(\cdot\mid s_{t,k},a_{t,k},u_t)\). At the end of the operating period, the next long-term state is drawn as \(x_{t+1}\sim P_{\mathrm s}(\cdot\mid s_{t,K_t},u_t)\). The long-term decision affects the operating environment faced by short-term decisions, and the resulting short-term responses shape the period outcome used for future long-term decisions.

\subsection{Period Payoff and Performance Objective}

Let \(c(x_t,u_t)\) denote the cost tied to the long-term commitment. The period payoff aggregates short-term rewards and subtracts this commitment cost
\[
R_t \triangleq \sum_{k=0}^{K_t-1} r_{t,k}-c(x_t,u_t).
\]
Here \(r_{t,k}\) is a net short-term interaction reward that may include short-term costs or penalties, while \(c(x_t,u_t)\) captures costs tied to the long-term commitment. Thus, \(R_t\) represents the aggregate period payoff generated by the long-term decision and the short-term responses.

For a stationary joint policy \((\pi,\phi)\), the performance criterion is
\[
\mathcal L(\pi,\phi)
\triangleq \E_{\rho_0}^{\pi,\phi}
\left[\sum_{t\ge0}\Gamma^t R_t
\right],
\]
where \(\Gamma\in(0,1)\) is the discount factor and \(\rho_0\) is the initial distribution over long-term states. The expectation is taken under the probability law generated by \(\rho_0\), the policies \((\pi,\phi)\), and the transition kernels. The criterion evaluates both decision layers through their joint contribution to current and future payoffs. It can represent firms that make different decisions at different levels but pursue a shared performance objective.

\section{Methodology}\label{sec:Method}

In this section, we present our method for joint policy learning. We first review the RL and PPO ingredients used to update the two policies. We then present our two-timescale training procedure.

\subsection{Policy Learning with PPO}\label{sec:prelim_rl}

RL learns a policy through repeated interaction with a decision environment, using reward feedback to improve future decisions. Proximal policy optimization (PPO) is a widely used RL algorithm for updating a parameterized stochastic policy \citep{schulman2017proximal}. Using collected interactions, PPO aims to update the policy in a direction that increases expected cumulative reward. To make this update more informative, PPO uses an advantage estimate, which measures whether an observed action performed better or worse than the policy's baseline at the corresponding state.

For a generic policy \(\psi\), the value function \(V^\psi(s)\) is the expected cumulative reward starting from state \(s\) and then following \(\psi\). The action-value function \(Q^\psi(s,a)\) is defined similarly, starting from state \(s\) after taking action \(a\). The advantage is then defined as \(A^\psi(s,a)=Q^\psi(s,a)-V^\psi(s)\), and thus measures the value of action \(a\) relative to the policy's baseline at state \(s\). Given interactions collected under a reference policy \(\psi_{\mathrm{old}}\), PPO updates a candidate policy \(\psi\) through the likelihood ratio \(w_\psi(s,a)= \frac{\psi(a\mid s)}{\psi_{\mathrm{old}}(a\mid s)}\).
Let \(\bar w_\psi(s,a)=\mathrm{clip}(w_\psi(s,a),1-\epsilon,1+\epsilon)\) for a clipping parameter \(\epsilon\in(0,1)\), and define \(m_\psi(s,a)=\min\{w_\psi(s,a)\widehat A(s,a),\bar w_\psi(s,a)\widehat A(s,a)\}\), where \(\widehat A(s,a)\) is an empirical advantage estimate. PPO-Clip maximizes the clipped surrogate objective
\begin{equation}\label{eq:ppo_clip_surrogate}
L^{\mathrm{clip}}(\psi)=\E\!\left[m_\psi(s,a)\right],
\end{equation}
where the expectation is over the state-action pairs collected under \(\psi_{\mathrm{old}}\). The clipping step limits the size of each policy update since a large update in the wrong direction can distort future actions, future states, and later feedback in online operations.

We parameterize the long-term and short-term policies as \(\pi_\theta\) and \(\phi_\omega\), where \(\theta\) and \(\omega\) are the corresponding parameter vectors. We update both through PPO-Clip gradient-based ascent:
\[
\theta\leftarrow\theta+\eta_{\mathrm s}g_{\mathrm s},\qquad
\omega\leftarrow\omega+\eta_{\mathrm f}g_{\mathrm f},
\]
where \(\eta_{\mathrm s}\) and \(\eta_{\mathrm f}\) are the learning rates, and \(g_{\mathrm s}\) and \(g_{\mathrm f}\) are gradient estimates of the corresponding clipped objectives computed from period-level and within-period transitions, respectively. The next subsection gives the full training loop.

\subsection{Two-timescale Hierarchical Reinforcement Learning}
\label{sec:two_timescale_hppo}

We describe how the two policies are updated. At a high level, the training loop alternates between long-term choices at the beginning of an operating period and short-term choices within that period. At the beginning of operating period \(t\), the long-term policy \(\pi_\theta(\cdot\mid x_t)\) samples the long-term action \(u_t\). For \(k=0,\ldots,K_t-1\), the short-term policy \(\phi_\omega(\cdot\mid s_{t,k},u_t,k)\) samples the short-term action \(a_{t,k}\). The short-term update uses within-period records \((s_{t,k},u_t,k,a_{t,k},r_{t,k},s_{t,k+1})\). The long-term update uses period-level records \((x_t,u_t,R_t,x_{t+1})\) after \(R_t\) is realized.

\begin{algorithm}[t]
\caption{Two-timescale hierarchical reinforcement learning}
\label{alg:twotimescale_hppo}
\begin{algorithmic}[1]
\State \textbf{Inputs} operating horizon \(T\), short-term policy batch size \(n_{\mathrm f}\), learning rates \(\eta_{\mathrm s},\eta_{\mathrm f}\), PPO clip parameter \(\epsilon_{\mathrm{clip}}\)
\State \textbf{Initialize} long-term policy \(\pi_{\theta}\) and short-term policy \(\phi_{\omega}\)
\For{\(t=0,1,\dots,T-1\)}
  \State Observe long-term state \(x_t\) and sample \(u_t\sim \pi_{\theta}(\cdot\mid x_t)\)
  \State Initialize within-period state \(s_{t,0}\) given \((x_t,u_t)\)
  \For{each short-term interaction \(k\) during operating period \(t\)}
    \State Sample \(a_{t,k}\sim \phi_{\omega}(\cdot\mid s_{t,k},u_t,k)\)
    \State Observe reward \(r_{t,k}\) and next within-period state \(s_{t,k+1}\)
    \State Add \((s_{t,k},u_t,k,a_{t,k},r_{t,k},s_{t,k+1})\) to the collected short-term data
    \If{the collected short-term data} contains \(n_{\mathrm f}\) records
      \State Update short-term policy \(\phi_{\omega}\) by PPO-Clip with \((\eta_{\mathrm f},\epsilon_{\mathrm{clip}})\), then clear short-term data
    \EndIf
  \EndFor
  \State Compute the period payoff \(R_t\) and observe \(x_{t+1}\)
  \State Update long-term policy \(\pi_{\theta}\) by PPO-Clip with \((\eta_{\mathrm s},\epsilon_{\mathrm{clip}})\)
\EndFor
\State \textbf{Output} learned long-term and short-term policies \(\pi_{\theta}\) and \(\phi_{\omega}\)
\end{algorithmic}
\end{algorithm}

Algorithm~\ref{alg:twotimescale_hppo} combines the two policy updates in a single training loop. The short-term policy is updated after \(n_{\mathrm f}\) within-period records are collected, while the long-term policy is updated after each period payoff is observed. The main tuning inputs are \(n_{\mathrm f}\), \(\eta_{\mathrm s}\), \(\eta_{\mathrm f}\), and \(\epsilon_{\mathrm{clip}}\). Batching limits the influence of any single atypical record, while the learning rates set the update scales and clipping limits the influence of large policy-ratio changes. These controls are important in online learning because an atypical record can create a misleading signal whose effect persists through later interactions.

\begin{remark}[Implementation in Existing Decision Hierarchies]Many businesses already organize long-term and short-term decisions hierarchically. To implement the framework, a firm specifies the state, action, reward, and decision time at each existing layer. The framework adds adaptive policy learning without redesigning this structure.
\end{remark}

On the other hand, Algorithm~\ref{alg:twotimescale_hppo} does not require a specific policy parameterization. The two policies may use different differentiable parameterizations suited to their state and action spaces. Linear parameterizations use fewer parameters and simplify training, while deep neural networks can capture nonlinear relationships but often require more parameters, data, and tuning.

Regardless of policy class, RL updates a policy toward the reward, so reward design is central to what the learned policy optimizes \citep{ng1999policy}. The short-term policy receives \(r_{t,k}\) after each interaction and therefore uses feedback close to its action. The long-term policy receives \(R_t\) only after the period ends. This payoff reflects the long-term action, within-period decisions, realized operating conditions, and shocks. The long-term signal is therefore delayed and depends on both policy layers.

A fixed rule may react to observed states, but it does not learn from new reward feedback. A partially adaptive method updates only one decision layer. Joint training instead updates both policies at their respective scales. The short-term policy adapts within conditions set by long-term decisions, while the long-term policy can change those conditions in later periods. These two adaptation channels provide a mechanism for resilient operations. Section~\ref{sec:case_study} evaluates this mechanism against rule-based and partially adaptive policies.

The two adaptation channels that support resilience also create a coupled learning problem. Short-term updates change the period outcomes used for long-term learning, while long-term updates change the states and operating conditions for short-term learning. The learning rates \(\eta_{\mathrm s}\) and \(\eta_{\mathrm f}\) control the policy-update scales. In sequential operations, each update changes subsequent learning, making the method sensitive to these scales. Large rates can destabilize learning, whereas small rates can delay adaptation. Because each policy changes the other's learning environment, the relative rates are also a joint design choice. Section~\ref{sec:convergence} analyzes a population policy-improvement counterpart and provides a sufficient calibration of the two update scales.

\section{Convergence Analysis}\label{sec:convergence}

This section analyzes the convergence behavior of our method. While Algorithm~\ref{alg:twotimescale_hppo} uses sampled PPO-Clip updates with empirical advantage estimates, we study a population policy-improvement counterpart with exact advantage directions to focus on population-level behavior without sampling noise. This abstraction follows recent analyses of PPO-Clip, where the population policy-improvement step admits an explicit exponentiated-gradient form \citep{huang2024ppo}. We apply this analytical update separately to the long-term and short-term policy updates.

At a decision node, let \(p\) denote the current action distribution, let \(A\) denote the exact advantage direction, and let \(\eta\) denote the learning rate. The representative population update is
\[
p^+(a)
=
\frac{p(a)\exp\{\eta A(a)\}}
{\sum_b p(b)\exp\{\eta A(b)\}}.
\]
The long-term and short-term updates use this rule with learning rates \(\eta_{\mathrm s}\) and \(\eta_{\mathrm f}\), respectively.

\subsection{Analytical Objects and Assumptions}
\label{sec:theory_notation}

Let \((\pi^*,\phi^*)\) be a fixed optimal stationary benchmark for \(\mathcal L\), choosing one if multiple optimal stationary policies exist. For a learning sequence \(\{(\pi_t,\phi_t)\}\), let \(\phi_{t+1}\) denote the short-term policy after the short-term updates in operating period \(t\). We measure progress by the optimality gap \(\Gap_t=\mathcal L(\pi^*,\phi^*)-\mathcal L(\pi_t,\phi_{t+1})\), where the learned policy pair is evaluated after the short-term update in operating period \(t\). For the analysis, we specialize the training loop to a fixed within-period horizon \(K\) and a fixed number \(M\) of short-term policy updates within each operating period.

\begin{assumption}[Markov environment]
\label{ass:markov_env}
The long-term state \(x_t=(I_t,Z_t)\) and the short-term state \(s_{t,k}\) take values in finite sets \(\mathcal X\) and \(\mathcal S\), respectively. The feasible long-term and short-term action sets are finite. The transition dynamics are Markovian at both time scales.
\end{assumption}

\begin{assumption}[Bounded rewards]
\label{ass:bounded_rewards}
There is \(B<\infty\) such that \(|r_k(s,a,u)|\le B\) and \(|c(x,u)|\le B\) for all feasible states and actions.
\end{assumption}

\begin{assumption}[Positive action probabilities]
\label{ass:positive_probs}
Every long-term policy during learning satisfies \(\pi(u\mid x)>0\) for every feasible long-term action \(u\in\mathcal U(x)\). Every short-term policy during learning satisfies \(\phi(a\mid s,u,k)>0\) for every feasible short-term action \(a\in\mathcal A(s,u,k)\).
\end{assumption}

Assumption~\ref{ass:markov_env} imposes a standard Markovian environment, where the transition probability of the next state depends on history only through the current state and action \citep{puterman2014markov}. In our formulation, \(Z_t\) records operating conditions,  including demand shocks, supply shocks, and recovery. Unexpected shocks can therefore be modeled as low-probability transitions from normal operating condition to a disrupted condition. This follows the state-augmentation logic in Markov-modulated operations models \citep{song1993inventory}.

Assumption~\ref{ass:bounded_rewards} keeps the objective \(\mathcal L\) finite. Since each operating period contains \(K\) short-term interactions, the bound \(B\) implies \(|R_t|\le (K+1)B\) for every period. Together with \(\Gamma\in(0,1)\), this gives a finite discounted objective and well-defined optimality gaps. This is the standard bounded-payoff condition used in discounted Markov decision models \citep{puterman2014markov}.

Assumption~\ref{ass:positive_probs} prevents hard elimination of feasible actions during learning. It keeps the KL terms in the convergence proof finite. The condition is natural for stochastic policy classes such as softmax policies, which assign positive probability to feasible actions and update their probabilities during policy improvement \citep{schulman2015trust,schulman2017proximal,shani2020adaptive}.

\subsection{Global Convergence}

We present a global convergence analysis for coupled two-timescale PPO-style learning, which is, to the best of our knowledge, the first result of this type. Recent PPO theory has begun to establish convergence guarantees for single-policy updates \citep{jin2023stationary,huang2024ppo}, but our result is not a direct application of that theory. The hierarchical structure creates a stability issue even under stationary transition dynamics, because the long-term and short-term updates enter each other's learning signals. We tackle this issue by aligning the long-term update scale with the cumulative short-term update scale through the learning rates, as formalized in Theorem~\ref{thm:global_convergence_generic}:

\begin{theorem}[Global average-gap convergence]
\label{thm:global_convergence_generic}
Suppose Assumptions~\ref{ass:markov_env}--\ref{ass:positive_probs} hold. If \(\eta_{\mathrm s}=T^{-1/2}\) and \(M\eta_{\mathrm f}=\{K/(1-\Gamma)\}\eta_{\mathrm s}\), then there exist finite constants \(C_0\) and \(C_1\), independent of \(T\), such that
\[
\frac{1}{T}\sum_{t=0}^{T-1}\E[\Gap_t]
\le
\frac{C_0+C_1}{\sqrt T}.
\]
\end{theorem}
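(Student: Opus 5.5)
The plan is to run a mirror-descent / natural-policy-gradient analysis, in the style of \citet{shani2020adaptive} and \citet{huang2024ppo}, on a single lifted MDP in which long-term and short-term decision nodes both appear. The exponentiated update
\[
p^+(a)\propto p(a)\exp\{\eta A(a)\}
\]
is the exact solution of the KL-regularized linearized problem at each node. The standard three-point inequality therefore gives, for any comparator \(q\),
\[
\eta\langle q-p,A\rangle \le \WKL{}(q\|p)-\WKL{}(q\|p^+)+\tfrac{\eta^2}{2}\|A\|_\infty^2 .
\]

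\textbf{Step 1: Performance difference for the pair.} I first derive a hierarchical performance-difference lemma. Write \(d^{*}\) for the discounted period-level occupancy of \((\pi^*,\phi^*)\) over \(x\in\mathcal X\). I will show that \(\mathcal L(\pi^*,\phi^*)-\mathcal L(\pi,\phi)\) splits into two pieces:
\[
\frac{1}{1-\Gamma}\E_{x\sim d^*}\langle \pi^*(\cdot\mid x)-\pi(\cdot\mid x),A^{\pi,\phi}_{\mathrm s}(x,\cdot)\rangle
\]
plus a short-term term
\[
\frac{1}{1-\Gamma}\E_{x\sim d^*,\,u\sim\pi^*}\sum_{k<K}\E_{s\sim d^{*}_k(\cdot\mid x,u)}\langle\phi^*-\phi,A^{\pi,\phi}_{\mathrm f}\rangle .
\]
Here \(A_{\mathrm f}\) is the advantage of the combined process, so it includes continuation value through \(P_{\mathrm s}\). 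This follows by telescoping over both timescales in the lifted MDP. The key point is that the short-term term carries weight of order \(K/(1-\Gamma)\) relative to a single within-period node.

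\textbf{Step 2: Per-period descent.} I apply the three-point inequality at every long-term node with learning rate \(\eta_{\mathrm s}\), and at every short-term node \(M\) times with learning rate \(\eta_{\mathrm f}\). I then weight the KL potentials by \(d^*\) and by \(d^*_k\) respectively. Define the potential
\[
\Phi_t=\E_{d^*}\WKL{}(\pi^*\|\pi_t)+\tfrac{K}{1-\Gamma}\cdot\tfrac{\eta_{\mathrm s}}{M\eta_{\mathrm f}}\cdot\bar{\mathrm{KL}}(\phi^*\|\phi_t).
\]
Under the calibration \(M\eta_{\mathrm f}=\{K/(1-\Gamma)\}\eta_{\mathrm s}\), this reduces to a plain sum of averaged KLs. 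Summing the inequalities gives
\[
\eta_{\mathrm s}\,\Gap_t\le\Phi_t-\Phi_{t+1}+\eta_{\mathrm s}^2C_1+\text{(mismatch)} .
\]
The advantages are bounded by \((K+1)B/(1-\Gamma)\) via Assumption~\ref{ass:bounded_rewards}, and Assumption~\ref{ass:positive_probs} keeps the KL terms finite. By Assumption~\ref{ass:markov_env}, \(\Phi_0\le C_0\) for a softmax/uniform initialization.

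\textbf{Step 3: The main obstacle, and telescoping.} The main obstacle is the mismatch term. The long-term update at period \(t\) uses \(A^{\pi_t,\phi_{t}}\) or \(A^{\pi_t,\phi_{t+1}}\), and the \(M\) short-term updates each use advantages of intermediate iterates. None of these is exactly the advantage appearing in the performance difference for \(\Gap_t=\mathcal L^*-\mathcal L(\pi_t,\phi_{t+1})\). I would handle this by showing that each exponentiated step moves the policy by \(O(\eta)\) in total variation, since \(\|p^+-p\|_1\le 2(e^{\eta\|A\|_\infty}-1)\). Simulation-lemma bounds then imply that advantages change by \(O(\eta\,K^2B/(1-\Gamma)^3)\). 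The cumulative drift within a period is \(O(M\eta_{\mathrm f}+\eta_{\mathrm s})=O(\eta_{\mathrm s})\) precisely because of the calibration, so the mismatch contributes another \(O(\eta_{\mathrm s}^2)\) term, absorbed into \(C_1\). Monotone improvement of each exponentiated step, which gives nonnegative inner products against the current advantage, should let me orient the inequalities so that this works with \(\mathcal L(\pi_t,\phi_{t+1})\).

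Summing over \(t\) and dividing by \(\eta_{\mathrm s}T\) then gives
\[
\frac1T\sum_{t=0}^{T-1}\Gap_t\le \frac{C_0}{\eta_{\mathrm s}T}+C_1\eta_{\mathrm s},
\]
and \(\eta_{\mathrm s}=T^{-1/2}\) gives the claimed bound. Since the update is population-level, the expectation is trivial, or it is taken over any randomness in \(x_t\).
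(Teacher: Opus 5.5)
Your proposal is correct and follows the paper's proof essentially step for step: the Hoeffding (three-point) KL inequality for the exponentiated update, the joint performance-difference identity with weights $1/(1-\Gamma)$ and $K/(1-\Gamma)$ under the benchmark occupancies, a KL potential that reduces to a plain sum of the two occupancy-weighted KLs under the synchronization, and an $O(M^2\eta_{\mathrm f}^2)=O(\eta_{\mathrm s}^2)$ bound on the within-period mismatch from the $O(\eta)$ movement bound plus Lipschitz continuity of the short-term advantage, followed by telescoping with $\eta_{\mathrm s}=T^{-1/2}$. Two bookkeeping points should be made explicit. First, as in the paper, the long-term step must use the scaled direction $A_{\mathrm s}^{\pi_t,\phi_{t+1}}/(1-\Gamma)$; with an unscaled $A_{\mathrm s}$, the stated calibration leaves an unsigned term $\eta_{\mathrm s}\tfrac{\Gamma}{1-\Gamma}\langle A_{\mathrm s}^{t},\pi^*-\pi_t\rangle_{\mathrm s}$ instead of producing exactly $\eta_{\mathrm s}\Gap_t$. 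Second, no monotone-improvement argument is needed, because the mismatch is controlled in absolute value.
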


Theorem~\ref{thm:global_convergence_generic} gives an \(O(T^{-1/2})\) bound on the average optimality gap by synchronizing the two learning rates. This synchronization arises since the long-term and short-term updates enter each other's learning signals. Specifically, \(M\eta_{\mathrm f}=\{K/(1-\Gamma)\}\eta_{\mathrm s}\) links the cumulative short-term update scale within one period to the long-term update scale. Here \(M\) appears because the short-term policy is updated \(M\) times within an operating period, so \(M\eta_{\mathrm f}\) is the cumulative short-term update scale over that period. The factor \(K\) appears because each operating period contains \(K\) short-term interactions, and \(1/(1-\Gamma)\) appears because short-term responses enter the discounted objective. Thus, \(K\) and \(\Gamma\) determine how strongly the short-term layer enters the scale of long-term learning.

The synchronization also provides useful managerial insights. When \(K\) is large, short-term decisions account for a larger share of the period payoff \(R_t\), so the long-term policy learns from a signal that depends more heavily on the stability of short-term responses. When \(\Gamma\) is close to one, short-term responses have a larger role in the long-run objective, so their effects persist more strongly in the learning signal. In systems with many within-period interactions or long-run consequences, rich short-term feedback supports more frequent short-term updates. Increasing \(M\), however, calls for reducing \(\eta_{\mathrm f}\), so that \(M\eta_{\mathrm f}\) remains calibrated to \(\eta_{\mathrm s}\). If \(M\eta_{\mathrm f}\) is too large, the period payoff may reflect changes in the short-term response rule rather than a stable evaluation of the long-term action. The calibration keeps short-term learning active while preserving the period payoff as a useful signal for updating future long-term actions.

Beyond the rate, the constants \(C_0\) and \(C_1\) capture factors that affect the size of the bound. The constant \(C_0\) comes from the distance between the initial policy pair and the optimal \((\pi^*,\phi^*)\), so it is larger when the initial long-term and short-term policies are farther from \((\pi^*,\phi^*)\). The constant \(C_1\) captures features of the operating setting. Across settings, severe or persistent shock dynamics can be reflected in \(C_1\) through larger payoff losses from poor decisions, tighter feasible action sets, or transition dynamics that make future states more sensitive to long-term and short-term actions. In our formulation, these effects enter through \(Z_t\), which changes rewards, feasible actions, and transition probabilities. Overall, factors that naturally increase operating loss affect the constant \(C_1\), rather than the \(T^{-1/2}\) order.

\subsection{Accelerated Convergence under Market-Specific Structures}
\label{sec:accelerated_convergence}

Beyond the baseline convergence result in Theorem~\ref{thm:global_convergence_generic}, we next refine this result under a market condition that makes poor decisions visible in the objective. For the following assumption, let \(\Phi(\pi,\phi)\) denote the KL-based policy distance from the optimal stationary pair \((\pi^*,\phi^*)\), with the exact formula given in Section~\ref{app:proof_theorem1} of the Supplementary.

\begin{assumption}[Market sharpness]
\label{ass:market_sharpness}
There exists \(\mu>0\) such that every learning iterate satisfies \(\mathcal L(\pi^*,\phi^*)-\mathcal L(\pi_t,\phi_t)\ge \mu\Phi(\pi_t,\phi_t)\).
\end{assumption}

Assumption~\ref{ass:market_sharpness} links the KL-based policy distance \(\Phi\) to the optimality gap. If the learning policy pair is farther from \((\pi^*,\phi^*)\) in \(\Phi\), then its objective value must be lower by at least a proportional amount. In learning terms, the objective provides a clear loss signal when the current long-term or short-term policy moves in the wrong direction. In market terms, poor replenishment or pricing decisions quickly appear as lost sales or margin loss. A simple deterministic pricing example gives the basic intuition. If \(D(p)=a-bp\) with \(b>0\) and the interior feasible maximizer is \(p^*=a/(2b)\), then \(r(p)=pD(p)\) satisfies \(r(p^*)-r(p)=b(p-p^*)^2\) on the relevant feasible price range \citep{gallego1994optimal}. Assumption~\ref{ass:market_sharpness} is the policy-level analogue of this visible-loss condition.

Assumption~\ref{ass:market_sharpness} does not imply that markets with disruptions or tight constraints have lower absolute loss. It rules out broad flat regions in which policies far from \((\pi^*,\phi^*)\) have nearly optimal objective values. The accelerated result below uses this visible-loss structure through \(\mu\), while operating difficulty remains in the constants.

\begin{theorem}[Accelerated convergence under market sharpness]
\label{thm:rate_improve_average}
Suppose Assumptions~\ref{ass:markov_env}--\ref{ass:market_sharpness} hold. Choose \(\eta_{\mathrm s,t}=2/\{\mu(t+t_0)\}\), with \(t_0\) sufficiently large, and synchronize the short-term learning rate by \(M\eta_{\mathrm f,t}=\{K/(1-\Gamma)\}\eta_{\mathrm s,t}\). Then there exists a finite constant \(C_2\), independent of \(T\), such that for every \(T\ge2\),
\[
\frac{1}{T}\sum_{t=0}^{T-1}\E[\Gap_t]
\le
\frac{C_2\log T}{\mu T}.
\]
\end{theorem}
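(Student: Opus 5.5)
The plan is to reuse the one-step potential inequality behind Theorem~\ref{thm:global_convergence_generic}, now with time-varying step sizes, and to close it with the sharpness condition. Let \(\Phi_t=\Phi(\pi_t,\phi_t)\) denote the weighted KL potential. For the long-term layer it is the sum over \(x\) of \(\WKL{}(\pi^*(\cdot\mid x)\,\|\,\pi_t(\cdot\mid x))\), weighted by the discounted visitation of \((\pi^*,\phi^*)\). For the short-term layer it is the analogous sum over nodes \((s,u,k)\), with the weight scaled by \((1-\Gamma)/K\) so that both layers live on the objective's scale.

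First I will establish the per-period descent inequality. For the exponentiated-gradient update with exact advantages, the standard mirror-descent three-point identity gives, at each node,
\[\WKL{}(p^*\|p^+)\le \WKL{}(p^*\|p)-\eta\langle p^*-p,A\rangle+\eta^2\|A\|_\infty^2/2.\]
I will average this with the optimal visitation weights. The performance-difference lemma, applied at the period level and within the period, turns the weighted inner products into the gap \(\mathcal L(\pi^*,\phi^*)-\mathcal L(\pi_t,\phi_{t+1})\) plus cross terms. These cross terms arise because the long-term advantage is evaluated under the old short-term policy, and they are controlled by the synchronization \(M\eta_{\mathrm f,t}=\{K/(1-\Gamma)\}\eta_{\mathrm s,t}\). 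This is exactly the mechanism already used for Theorem~\ref{thm:global_convergence_generic}, where a stepsize-independent mismatch would cancel. Since the synchronization holds period by period, I expect the same argument to give
\[\E\Phi_{t+1}\le \E\Phi_t-\eta_{\mathrm s,t}\E\Gap_t+C\eta_{\mathrm s,t}^2,\]
with \(C\) depending on \(B,K,\Gamma\) through \(|A|\le (K+1)B/(1-\Gamma)\). \textbf{The main obstacle} is verifying that no term linear in \(\eta_{\mathrm s,t}\), beyond \(\Gap_t\), survives when the stepsizes vary with \(t\). The way to check this is to confirm that the coupling error is genuinely second order in the per-period step, bounded by \(M\eta_{\mathrm f,t}\cdot\eta_{\mathrm s,t}\). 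Another point to handle is that sharpness is stated for \((\pi_t,\phi_t)\) while the gap uses \(\phi_{t+1}\). I will bridge the two by noting that \(\mathcal L(\pi_t,\phi_{t+1})\ge \mathcal L(\pi_t,\phi_t)-O(M\eta_{\mathrm f,t})\) via performance difference, which costs only an additional \(O(\eta_{\mathrm s,t}^2)\) after multiplication by \(\eta_{\mathrm s,t}\).

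Next I will use Assumption~\ref{ass:market_sharpness}. Splitting \(\Gap_t\) in half and applying \(\Gap_t\gtrsim\mu\Phi_t\) to one half gives
\[\E\Phi_{t+1}\le(1-\mu\eta_{\mathrm s,t}/2)\E\Phi_t-\tfrac{\eta_{\mathrm s,t}}{2}\E\Gap_t+C'\eta_{\mathrm s,t}^2.\]
Substituting \(\eta_{\mathrm s,t}=2/\{\mu(t+t_0)\}\), the contraction factor becomes \(1-1/(t+t_0)\). Here \(t_0\) is chosen large enough that the stepsizes stay small enough for the quadratic remainder bound of the mirror step to be valid. I will then multiply by \((t+t_0)\) and rearrange to get
\[\tfrac{1}{\mu}\E\Gap_t\le (t+t_0-1)\E\Phi_t-(t+t_0)\E\Phi_{t+1}+\tfrac{4C'}{\mu^2(t+t_0)}.\]

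Finally, I will sum over \(t=0,\dots,T-1\). The potential terms telescope to at most \((t_0-1)\Phi_0\), which is finite by Assumption~\ref{ass:positive_probs}. The harmonic sum contributes \(O(\log T/\mu^2)\). Multiplying through by \(\mu\) and dividing by \(T\) yields the bound \(\{\mu t_0\Phi_0+4C'(1+\log T)/\mu\}/T\). With \(T\ge2\) this is at most \(C_2\log T/(\mu T)\), absorbing \(\mu^2 t_0\Phi_0\) and \(4C'\) into \(C_2\).
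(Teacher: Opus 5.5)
Your proposal is correct. Its first half matches the paper's proof. You import the synchronized drift inequality $\E\Phi_{t+1}\le\E\Phi_t-\eta_{\mathrm s,t}\E\Gap_t+C\eta_{\mathrm s,t}^2$ from the proof of Theorem~\ref{thm:global_convergence_generic} with time-varying rates. You then bridge sharpness, which is stated at $(\pi_t,\phi_t)$, to the gap, which is evaluated at $(\pi_t,\phi_{t+1})$, at cost $O(M\eta_{\mathrm f,t})=O(\eta_{\mathrm s,t})$, as the paper does.

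Two small corrections to your narrative, neither of which affects the argument since you only use the resulting inequality. First, in the paper the long-term direction is evaluated at the post-update pair $(\pi_t,\phi_{t+1})$. The coupling residual instead comes from matching the short-term part of the gap, taken at the final iterate $\phi_{t,M}$, against the $M$ mirror steps taken at the intermediate iterates $\phi_{t,m}$. Second, the lower bound on $t_0$ is needed for the movement constant $B_{\mathrm f}$. It is not needed for the Hoeffding remainder, which holds for every stepsize.

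You genuinely depart from the paper in converting the contraction into an average-gap bound. The paper substitutes the full sharpness bound to get $\Delta_{t+1}\le(1-\mu\eta_{\mathrm s,t})\Delta_t+C'\eta_{\mathrm s,t}^2$ and proves $\Delta_t\le\nu/(t+t_0)$ by induction. It then makes a second pass through the original drift inequality, summing with weights $t+t_0$ by Abel summation and bounding the leftover $\sum_t\Delta_t$ with the induction bound. You instead spend only half of $\Gap_t$ on the contraction. The factor then becomes exactly $1-1/(t+t_0)$, and the weights $t+t_0$ telescope in a single pass.

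Your route is shorter and avoids the induction. It also gives a slightly sharper constant: the initialization term $\mu t_0\Phi_0$ enters at order $1/T$. In the paper it is multiplied by $\log T$, because $\nu\ge t_0\Delta_0$ multiplies the harmonic sum. In exchange, the paper's route delivers the last-iterate bound $\E\Phi_t=O(1/t)$ on the KL distance to $(\pi^*,\phi^*)$ as a by-product. Your single inequality recovers that bound only up to a logarithmic factor, after dropping $\Gap_t\ge0$.
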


Theorem~\ref{thm:rate_improve_average} improves the baseline \(T^{-1/2}\) bound to a \(\log T/T\) bound for the average optimality gap. Under Assumption~\ref{ass:market_sharpness}, the parameter \(\mu\) measures how strongly the KL-based policy distance appears as an optimality gap. As \(\mu\) increases, poor long-term or short-term decisions are more sharply separated from good ones in the objective, which gives a stronger contraction. At the same time, the learning-rate synchronization remains necessary because the short-term policy still moves within each operating period and its cumulative movement must remain aligned with long-term learning.

Accordingly, the constant \(C_2\) absorbs initialization, the fixed difficulty terms from Theorem~\ref{thm:global_convergence_generic}, and the correction caused by evaluating \(\Gap_t\) after the short-term update. Severe shock dynamics or tight feasible action sets can increase \(C_2\) through larger payoff losses or transition dynamics that make future states more sensitive to long-term and short-term actions. Thus, \(\mu\) governs the rate improvement, while operating difficulty affects the constant.

\begin{remark}[Market Difficulty and Learning] Harsh market conditions can reduce profit even under an optimal policy pair. Theorem~\ref{thm:rate_improve_average} does not imply that such conditions improve absolute performance. Instead, when poor decisions produce steeper profit losses, those losses provide a clearer learning signal, allowing our policies to distinguish better decisions and approach the optimal policy pair faster. By contrast, a fixed rule does not learn from these losses and may continue to make the same decisions. Thus, although difficult conditions affect all policies, our framework uses the resulting feedback to move more quickly toward the best available response.
\end{remark}

\section{Case Study: Used-Car Inventory and Pricing}
\label{sec:case_study}

We study a used-car retail setting in which replenishment determines the inventory available for future customer arrivals, while pricing determines the margin and purchase probability at each arrival. The case study compares four policy configurations under regular operations, one-sided shocks, and joint demand--supply shocks. The configurations differ in whether adaptation occurs in pricing, replenishment, both, or neither. This comparison allows us to assess how adaptation at each decision layer affects performance during shocks and recovery.

\subsection{Experiment Setup}

The experiment combines heterogeneous demand, seasonality, and demand and supply shocks. We evaluate all policies on matched operating paths and compare them by period profit. The following subsections define the profit and training rewards, the shock channels, and the policy choices.

\subsubsection{Profit and Reward Construction}
\label{sec:reward_construction}

We define the profit for performance comparisons and the two rewards used for policy training. The profit measure aggregates sales, inventory holding, ordering, and lost sales into period profit. The rewards are separated because the long-term and short-term policies make decisions at different time scales and use rewards tied to those decisions. The long-term replenishment policy uses period profit, while the short-term pricing policy uses a customer-arrival reward based on margin, inventory pressure, and stockout events.

Let \(\mathcal C=\{\mathrm{budget},\mathrm{mid},\mathrm{premium}\}\) denote the set of vehicle classes. The period profit is defined as $\Pi_t
=
\sum_{c\in\mathcal C}\sum_{i\in\mathcal S_{c,t}}(p_i-w_c)
-\sum_{c\in\mathcal C}h_c I^+_{c,t}
-F\mathbf 1\!\left\{\sum_{c\in\mathcal C}q_{c,t}>0\right\}
-\sum_{c\in\mathcal C}\lambda_c\ell_{c,t}.
$
Here, \(\mathcal S_{c,t}\) is the set of class-\(c\) units sold in period \(t\), \(p_i\) is the realized selling price of unit \(i\), and \(w_c\) is the acquisition cost for class \(c\). The term \(h_c\) is the per-period holding cost, \(I^+_{c,t}\) is post-sales inventory, \(F\) is the fixed ordering cost, \(q_{c,t}\) is the order quantity, \(\lambda_c\) is the lost-sale penalty, and \(\ell_{c,t}\) is the number of lost sales. Acquisition cost is charged when a unit is sold rather than when it is ordered, so that period profit is aligned with realized sales rather than procurement timing.

\begin{table}[ht!]
\centering
\caption{Rewards used for policy training in the case study.}
\label{tab:reward_construction}
\small
\begin{tabular}{p{0.28\textwidth}p{0.62\textwidth}}
\toprule
\textbf{Reward} & \textbf{Definition} \\
\midrule
Long-term policy reward &
\(\displaystyle r_t^{\mathrm{LT}}=\Pi_t/\kappa\) \\[1.4ex]

Short-term policy reward &
\(\displaystyle
\begin{aligned}
r_n^{\mathrm{ST}}
={}
(p_c-w_c)\mathbf 1\{\mathrm{sale}\}
-\lambda_I\sum_{c\in\mathcal C}\frac{h_c}{N_t}I_c 
-\lambda_{\mathrm{lost}}\mathbf 1\{\mathrm{stockout}\}
\end{aligned}
\) \\
\bottomrule
\end{tabular}
\end{table}

In Table~\ref{tab:reward_construction}, \(\kappa>0\) is a constant used only to normalize the long-term policy reward for training. In the short-term policy reward, \(p_c\) is the posted price for class \(c\), \(N_t\) is the number of customer arrivals in period \(t\), \(I_c\) is the current inventory of class \(c\) at the customer arrival, and \(\lambda_I\) and \(\lambda_{\mathrm{lost}}\) are weights to prevent policies from keeping excessive stock while waiting for high-margin sales and to discourage overly aggressive pricing when inventory is scarce. We highlight that the profit measure \(\Pi_t\) is used for performance comparisons, while \(r_t^{\mathrm{LT}}\) and \(r_n^{\mathrm{ST}}\) are used only for training. The two rewards are not added to \(\Pi_t\), so the profit measure does not double-count the components.

\subsubsection{Shock Setup}
\label{sec:shock_setup}

The experiments use two disruption channels: demand and supply. A demand shock changes arrival volume and customer class mix. Demand surges raise arrivals and may shift demand toward higher-value classes, while contractions do the opposite. A supply shock constrains replenishment by reducing the number of fulfilled orders and delaying inventory availability. Joint shocks activate both channels.

Shocks are exogenous and are not observed before they occur. Once realized, their effects enter the operating state through arrivals, class mix, inventory availability, and order fulfillment. Within each seed, all policies face the same customer stream, purchase draws, shock timing, and supply randomness. Thus, the comparisons are paired, and only the policy differs.

\subsubsection{Baseline Policies and Comparison Design}
\label{sec:policies-comparison}

Table~\ref{tab:factorial1} summarizes the paired \(2\times2\) comparison design. Rows vary the replenishment policy, and columns vary the pricing policy. This design separates the value of RL pricing, RL replenishment, and the joint policy.

\begin{table}[ht!]
\centering
\caption{Paired \(2\times2\) comparison design.}
\label{tab:factorial1}
\small
\begin{tabular}{@{}lcc@{}}
\toprule
 & \textbf{Fixed pricing} & \textbf{RL pricing} \\
\midrule
\textbf{OUL replenishment}
  & OUL+Fixed & OUL+RL \\[2pt]
\textbf{RL replenishment}
  & RL+Fixed  & HRL    \\
\bottomrule
\end{tabular}
\end{table}

OUL stands for order-up-to-level, a standard periodic-review inventory rule rooted in classical inventory theory \citep{arrow1951optimal,arrow1958studies,clark1960optimal,federgruen1984efficient,porteus2002foundations}. In our implementation, the OUL parameters are periodically re-estimated from observations. Thus, OUL is a rule-based benchmark that updates its parameters over time, but its decision rule is prescribed rather than learned from operating feedback. Fixed pricing uses a constant markup over the acquisition cost, while RL pricing updates the customer-arrival pricing policy based on reward feedback.

OUL+Fixed combines OUL with fixed-markup pricing. OUL+RL keeps OUL and replaces fixed pricing with the RL pricing policy. RL+Fixed replaces OUL with the RL replenishment policy and keeps fixed-markup pricing. HRL updates both the replenishment and pricing policies jointly. The four policies therefore compare pricing-only adaptation, replenishment-only adaptation, and joint replenishment-pricing adaptation against a rule-based benchmark.

Because policies are evaluated on matched exogenous operating paths within each seed, profit differences isolate policy behavior. Let \(P(\cdot)\) denote mean profit over a given analysis window. We use the \(2\times2\) interaction contrast \(\Delta_{\mathrm{coord}}
=P(\mathrm{HRL})-P(\mathrm{OUL{+}RL})-P(\mathrm{RL{+}Fixed})+P(\mathrm{OUL{+}Fixed})\) to measure the additional value of adapting replenishment and pricing jointly. A positive \(\Delta_{\mathrm{coord}}\) means that HRL creates more value than the additive benchmark formed by pricing-only and replenishment-only adaptation. A zero value indicates additive effects, and a negative value indicates that joint adaptation does not preserve the separate gains. We compute this contrast over the full evaluation horizon and over disruption and recovery windows.

\subsection{Experiment 1: Multiple Shock Combinations}
\label{sec:results}

Experiment~1 compares the four policies across shock settings. Each policy is evaluated over \(3{,}000\) operating periods and 30 random seeds under the paired comparison design. We organize the analysis in four steps. First, aggregate profitability ranks the policies across shock settings. Second, an event-window analysis compares pre-shock, end-of-shock, and recovery profits to quantify shock-period profit deterioration and post-shock rebound. Third, for joint shocks, price, inventory, and lost-sales patterns diagnose the profit differences. Finally, a \(2\times2\) contrast evaluates whether joint replenishment-pricing learning adds value beyond standalone pricing and replenishment learning. Together, these analyses ask: \emph{which policy earns the highest profit, how profits deteriorate during shocks and rebound afterward, what price, inventory, and lost-sales patterns explain the differences, and whether joint learning adds value beyond standalone pricing and replenishment learning}.

\subsubsection{Aggregate Profitability}
\label{sec:exp1-profit}

We begin with the primary aggregate-profitability question: \emph{which policy earns the highest profit, and how does the ranking change across shock settings?} Table~\ref{tab:exp1-profit} compares mean profit and cumulative profit across the four operating settings. Two patterns are immediate. First, policies with RL pricing achieve substantially higher profit than fixed-pricing policies in every setting. In our setting, pricing is the short-term decision, so this pattern reflects the value of adapting at the faster decision scale. The pricing policy can respond at each customer arrival as demand, supply, and inventory availability change, whereas fixed pricing remains tied to acquisition cost within a prescribed rule. Second, HRL earns the highest mean profit in all cases except supply-only shocks. Under no shocks and demand-only shocks, HRL and OUL+RL are both in the high-profit range. In the supply-only exception, OUL+RL earns the highest mean profit, but HRL remains close based on the 95\% confidence intervals.

\begin{table}[t!]
  \centering
  \caption{Experiment~1: Mean profit by policy and shock setting over periods 450--3{,}000, based on 30 seeds. Values are reported with 95\% confidence intervals. The best policy in each shock setting is in bold.}
  \label{tab:exp1-profit}
  \small
  \begin{tabular}{l l r@{\;\,$\pm$\;\,}l r@{\;\,$\pm$\;\,}l}
    \toprule
    Shock setting & Policy
      & \multicolumn{2}{c}{Profit per period (\$K)}
      & \multicolumn{2}{c}{Cumulative profit (\$M)} \\
    \midrule
    \multirow{4}{*}{No shocks}
      & OUL+Fixed  & 204.1 & 0.1  & 520.5 & 0.4 \\
      & OUL+RL     & 254.2 & 15.7 & 648.3 & 20.2 \\
      & RL+Fixed   & 206.6 & 0.3  & 526.8 & 0.9 \\
      & HRL        & \textbf{263.8} & 14.9 & \textbf{672.8} & 18.0 \\
    \addlinespace
    \multirow{4}{*}{Demand only}
      & OUL+Fixed  & 202.8 & 0.1  & 517.2 & 0.4 \\
      & OUL+RL     & 261.7 & 13.5 & 667.2 & 14.4 \\
      & RL+Fixed   & 205.0 & 0.4  & 522.9 & 0.9 \\
      & HRL        & \textbf{265.5} & 17.2 & \textbf{677.1} & 23.9 \\
    \addlinespace
    \multirow{4}{*}{Supply only}
      & OUL+Fixed  & 184.1 & 0.2  & 469.5 & 0.6 \\
      & OUL+RL     & \textbf{246.4} & 12.7 & \textbf{628.2} & 32.5 \\
      & RL+Fixed   & 188.3 & 0.3  & 480.1 & 0.8 \\
      & HRL        & 238.9 & 17.5 & 609.2 & 24.5 \\
    \addlinespace
    \multirow{4}{*}{Joint shocks}
      & OUL+Fixed  & 183.2 & 0.2  & 467.3 & 0.5 \\
      & OUL+RL     & 238.7 & 12.4 & 608.7 & 31.6 \\
      & RL+Fixed   & 188.2 & 0.3  & 480.0 & 0.7 \\
      & HRL        & \textbf{260.7} & 16.7 & \textbf{664.7} & 22.6 \\
    \bottomrule
  \end{tabular}
\end{table}

The asymmetric gains from the two partially adaptive policies further identify the short-term policy as the primary adaptation lever. OUL+RL substantially improves over OUL+Fixed in every setting, whereas RL+Fixed improves only modestly. In this case study, pricing decisions are made at customer arrivals and receive frequent purchase and profit feedback. As a result, this short decision cycle allows the policy to adapt promptly to shocks, seasonality, and other changes in conditions. Thus, the key mechanism is not pricing itself, but the speed of short-term decisions and feedback.

However, we highlight that short-term adaptation is not the only valuable lever. Under joint shocks, HRL exceeds OUL+RL by about \$22K per period and \$56M in cumulative profit. In this setting, pricing can manage the available inventory, but it cannot change the inventory available to future arrivals. Consequently, adapting replenishment becomes important when shocks affect both demand and supply. This distinction matters because economic shocks and pandemics can combine demand and supply changes in unanticipated ways. By adapting both decision layers, HRL can respond within the current period, adjust inventory for later periods, and support recovery.
\begin{figure}[!t]
\centering\includegraphics[width=0.95\linewidth]{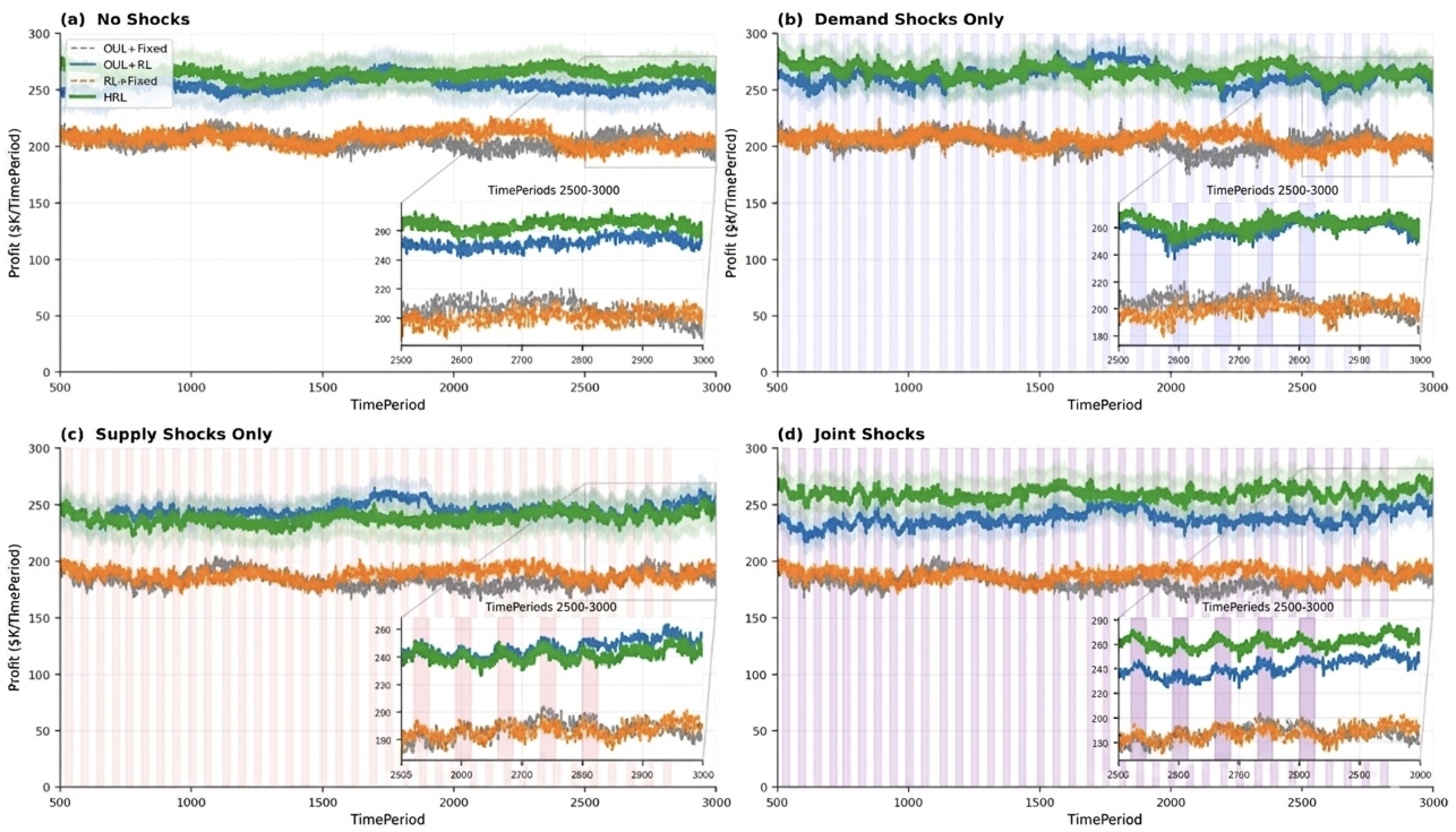}
  \caption{Experiment~1: Profit by shock setting, shown as 52-period rolling means. Panels correspond to (a) no shocks, (b) demand-only shocks, (c) supply-only shocks, and (d) joint shocks. Shaded bands show 95\% confidence intervals across 30 seeds, and colored regions mark shock periods. Insets provide a closer view of periods 2{,}500--3{,}000.}
  \label{fig:exp1-weekly}
\end{figure}

Figure~\ref{fig:exp1-weekly} shows that these aggregate patterns persist across operating periods rather than being driven by a few isolated periods. Under no shocks and demand-only shocks, HRL and OUL+RL remain in the high-profit range. Under supply-only shocks, OUL+RL maintains the higher path over much of the horizon, consistent with the limited immediate role of replenishment learning when order fulfillment is restricted. Under joint demand--supply shocks, HRL maintains the clearest separation, consistent with the value of adapting pricing and replenishment together.

Resilient operations require both strong performance and stability as conditions change. Across the four settings, HRL's profit path varies least across periods. This stability, together with its strong aggregate profit, shows how joint adaptation supports resilient operations.

The results also suggest a practical deployment strategy. Short-term adaptation is the primary response to changing conditions. If a firm expects only regular variation or one-sided shocks, or introduces RL one layer at a time, it can prioritize the short-term policy. However, real disruptions are difficult to anticipate and can affect demand and supply together. These results therefore favor HRL when resilience to complex shocks is the operating objective. The next analysis separates disruption and recovery windows to identify when this advantage emerges.

\subsubsection{Disruption Response}
\label{sec:exp1-shocks}

We next examine event windows around each shock to compare profits before the shock, at its end, and through recovery. 

For each shock event, let \(S_0\) be the mean profit over the five periods preceding the shock, \(S_1\) be the mean profit over the final five periods of the shock, \(R_0\) be the mean profit over the first five recovery periods, and \(R_1\) be the mean profit over the final five periods of the recovery window. We define resistance as \(S_1-S_0\), where lower values indicate larger shock-period losses, and recovery as \(R_1-R_0\), where larger values indicate stronger rebound. These windows are fixed across policies and shock settings.

Figure~\ref{fig:exp1-resilience} reports the four event-window profits for demand-only, supply-only, and joint shocks. Across all three panels, HRL has the largest rebound and the highest late-recovery profit. We examine each shock setting in turn.

\begin{figure}[htbp!]
  \centering
  \includegraphics[width=\linewidth]{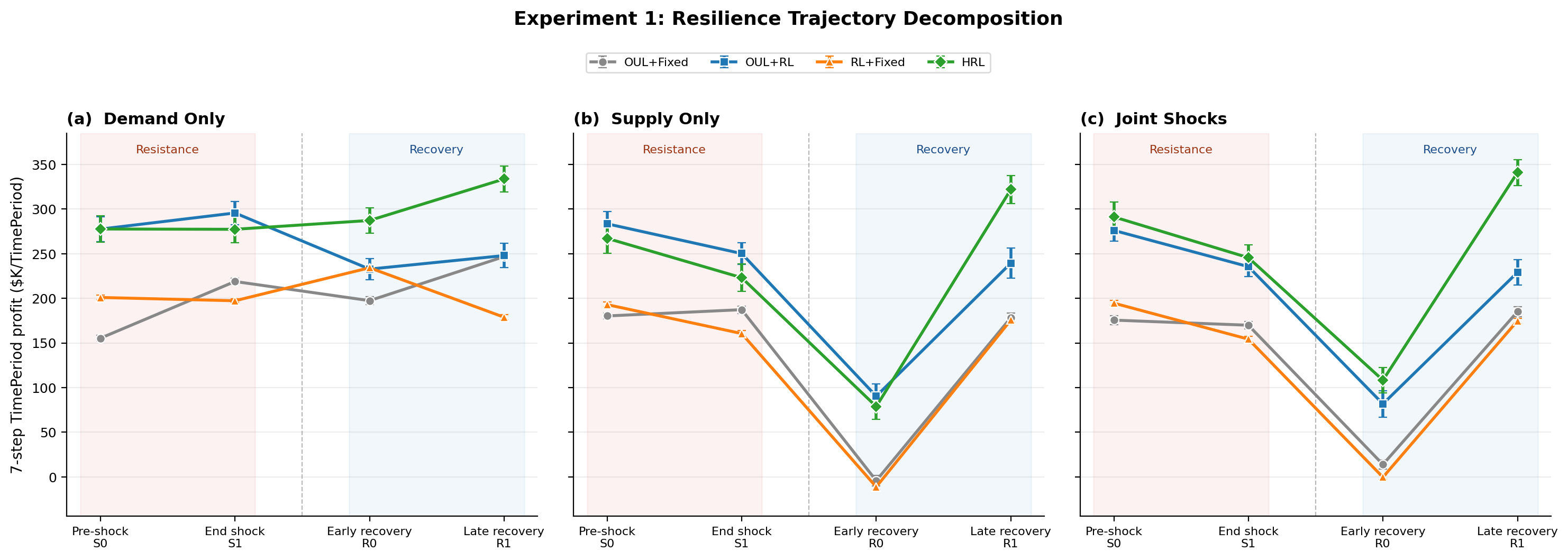}
  \caption{Profit response around disruption events in Experiment~1. Panels~(a)--(c) show profit per period before, during, and after demand-only, supply-only, and joint shocks. Each point reports mean profit over a five-period window. Error bars show 95\% confidence intervals across 30 seeds.}
  \label{fig:exp1-resilience}
\end{figure}

Panel~(a) isolates demand-only shocks, which change customer arrivals and class mix. HRL remains near \$278K per period from \(S_0\) to \(S_1\) and then rises to \$334K by \(R_1\), the highest late-recovery profit. OUL+Fixed also gains during the shock, but this reflects additional customer arrivals rather than policy adjustment. In contrast, RL+Fixed loses \$55K per period from \(R_0\) to \(R_1\). This pattern is consistent with pricing serving as the direct adaptation channel when the shock changes demand.

Panel~(b) isolates supply-only shocks. Because the supply shock restricts fulfilled orders, changing replenishment has limited immediate influence on realized inventory. OUL+RL has the highest mean profit at \(S_1\), reaching \$250K per period compared with \$223K for HRL. During the restriction, pricing can still adjust margins and purchase probabilities for the available inventory. However, HRL rebounds most strongly after the shock ends and reaches \$322K per period at \(R_1\), the highest late-recovery profit. Once the restriction ends, replenishment can again change inventory for future arrivals, allowing HRL to use both adaptation channels during recovery.

Panel~(c) combines demand and supply shocks. Consistent with the aggregate results, HRL has its clearest advantage when both shocks occur together. HRL has the highest profit at \(S_1\), reaching \$245K per period, and remains above OUL+RL at \(R_0\). It then rises from \$109K per period at \(R_0\) to \$341K at \(R_1\), giving the largest rebound and the highest late-recovery profit. The demand shock changes arrivals and class mix, while the supply shock changes future inventory availability. Pricing adjusts margins and purchase probabilities for current inventory, while replenishment changes the inventory available to future arrivals. This pattern is consistent with joint adaptation managing both the use of current inventory and the inventory available during recovery.

Taken together, these results therefore show that HRL is not only a high-profit policy, but also a recovery-oriented policy. Under isolated shocks, dealers may obtain much of the benefit from dynamic pricing because inventory availability remains relatively flexible under demand shocks, while pricing can help protect margins and slow inventory depletion under supply shocks. However, when demand and supply disruptions occur together, pricing flexibility alone is not enough. Dealers also need to rebuild the right inventory mix for future sales while adjusting prices during and after the shock to limit lost sales and margin decline. HRL is most useful in this setting because it coordinates both actions: pricing protects short-run profit during the shock, while replenishment supports a stronger recovery after the shock ends.

\subsubsection{Detailed Diagnosis under Joint Shocks}
\label{sec:exp1-mechanisms}

The preceding analyses show that HRL’s profit and recovery advantages are clearest under joint shocks. We examine now the underlying channels using average selling price, total inventory, and lost sales.

\begin{figure}[!h]
\centering
\includegraphics[width=0.99\linewidth]{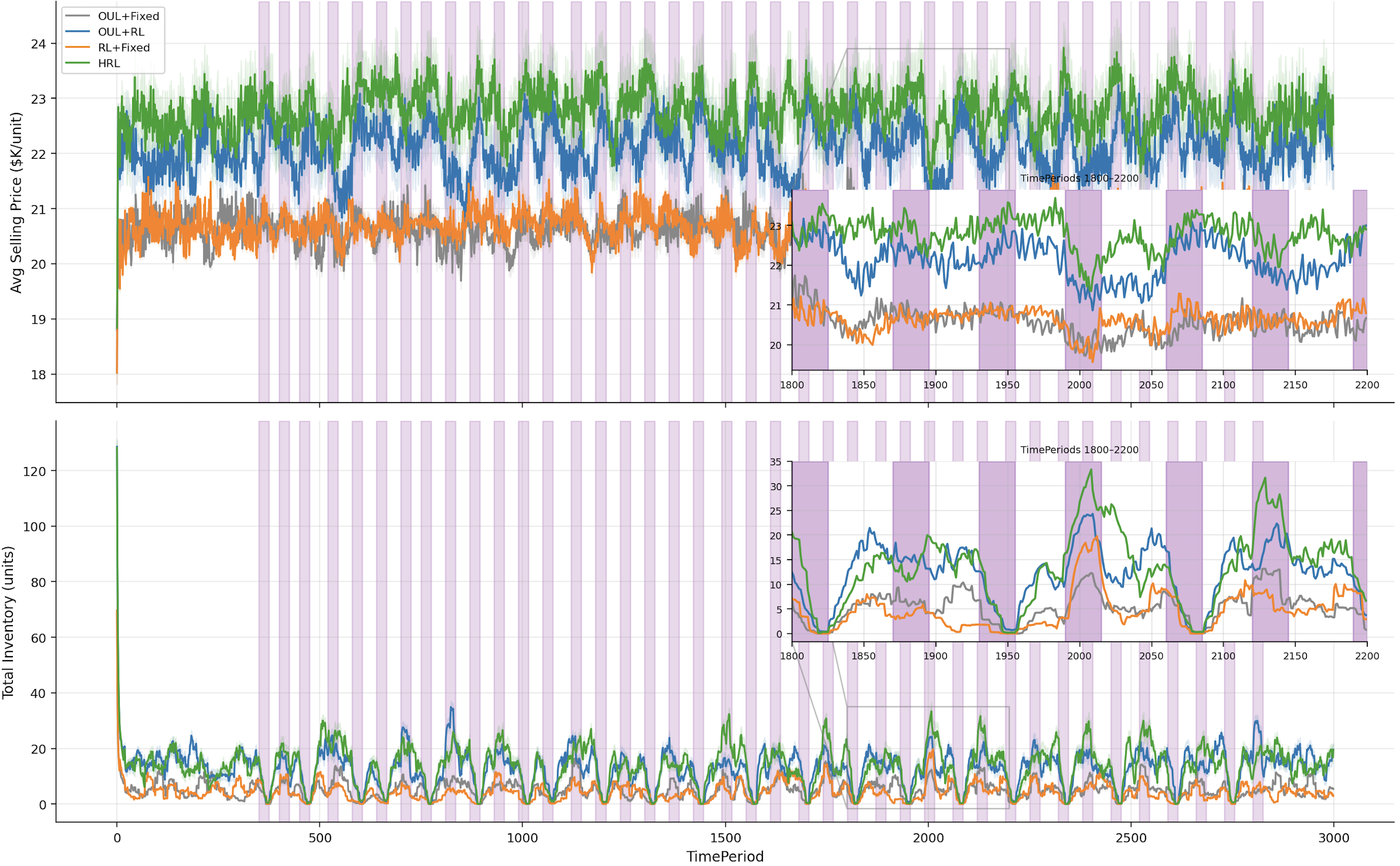}
\caption{Average selling price and total inventory over the full horizon. The top panel reports average selling price, and the bottom panel reports total inventory. Insets show periods 1{,}800--2{,}200.}
\label{fig:exp1-price-inv}
\end{figure}

Figure~\ref{fig:exp1-price-inv} reports full-horizon selling-price and inventory paths under joint shocks. The top panel shows that HRL maintains the highest average selling price for much of the horizon, mostly between \$22.5K and \$23.5K per unit. OUL+RL follows at a lower level, while the fixed-pricing policies remain near \$20K to \$21K per unit. Thus, policies with RL pricing generate stronger price responses than fixed-pricing policies, and HRL produces the strongest response among them. Higher prices are consistent with margin protection and slower inventory depletion when inventory is scarce. The bottom panel shows that HRL also retains more inventory after disruption-induced drawdowns. Because OUL+Fixed and OUL+RL share the OUL replenishment rule, their difference reflects how pricing changes realized sales and inventory depletion under a common replenishment rule. HRL differs because replenishment and pricing are learned jointly, changing both sales incentives and future stock availability. Figure~\ref{fig:exp1-endpoint-mech} aligns these patterns with the event windows \(S_0,S_1,R_0,R_1\).

\begin{figure}[htbp!]
\centering\includegraphics[width=0.99\linewidth]{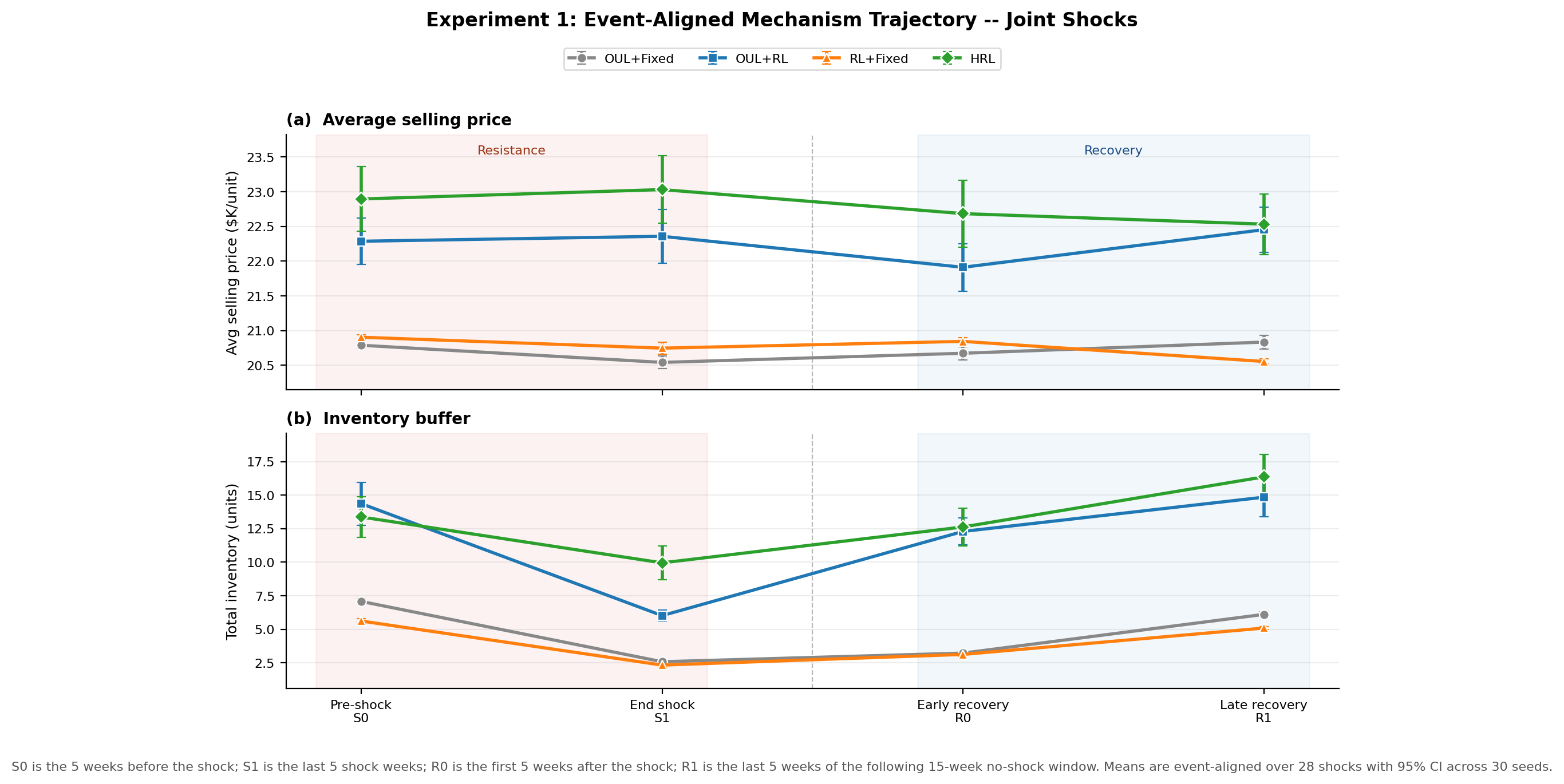}
  \caption{Price and inventory at the four disruption-response endpoints under joint shocks. The endpoints match Figure~\ref{fig:exp1-resilience}. \(S_0\) is the five-period pre-shock endpoint, \(S_1\) is the five-period end-shock endpoint, \(R_0\) is the five-period early-recovery endpoint, and \(R_1\) is the five-period late-recovery endpoint. Panel~(a) reports average selling price, and panel~(b) reports total inventory across 30 seeds.}
  \label{fig:exp1-endpoint-mech}
\end{figure}

Figure~\ref{fig:exp1-endpoint-mech} compares price and inventory at the four disruption-response endpoints. Panel~(a) shows that HRL also has the highest average selling price at the endpoints. It is highest before and at the end of the shock. OUL+RL follows, and the fixed-pricing policies remain below both RL-pricing policies. Panel~(b) clarifies the inventory component of the recovery result. OUL+RL enters the shock with slightly more inventory than HRL, but HRL retains more inventory by \(S_1\). HRL also has the highest inventory point estimate at both recovery endpoints. Thus, the recovery advantage in Figure~\ref{fig:exp1-resilience} is consistent with stronger inventory preservation at the end of the shock and a larger inventory base during recovery. Figure~\ref{fig:exp1-lost-sales} then checks whether the higher-price and inventory-retention pattern is accompanied by more lost sales.

\begin{figure}[htbp!]
  \centering
  \includegraphics[width=0.9\linewidth]{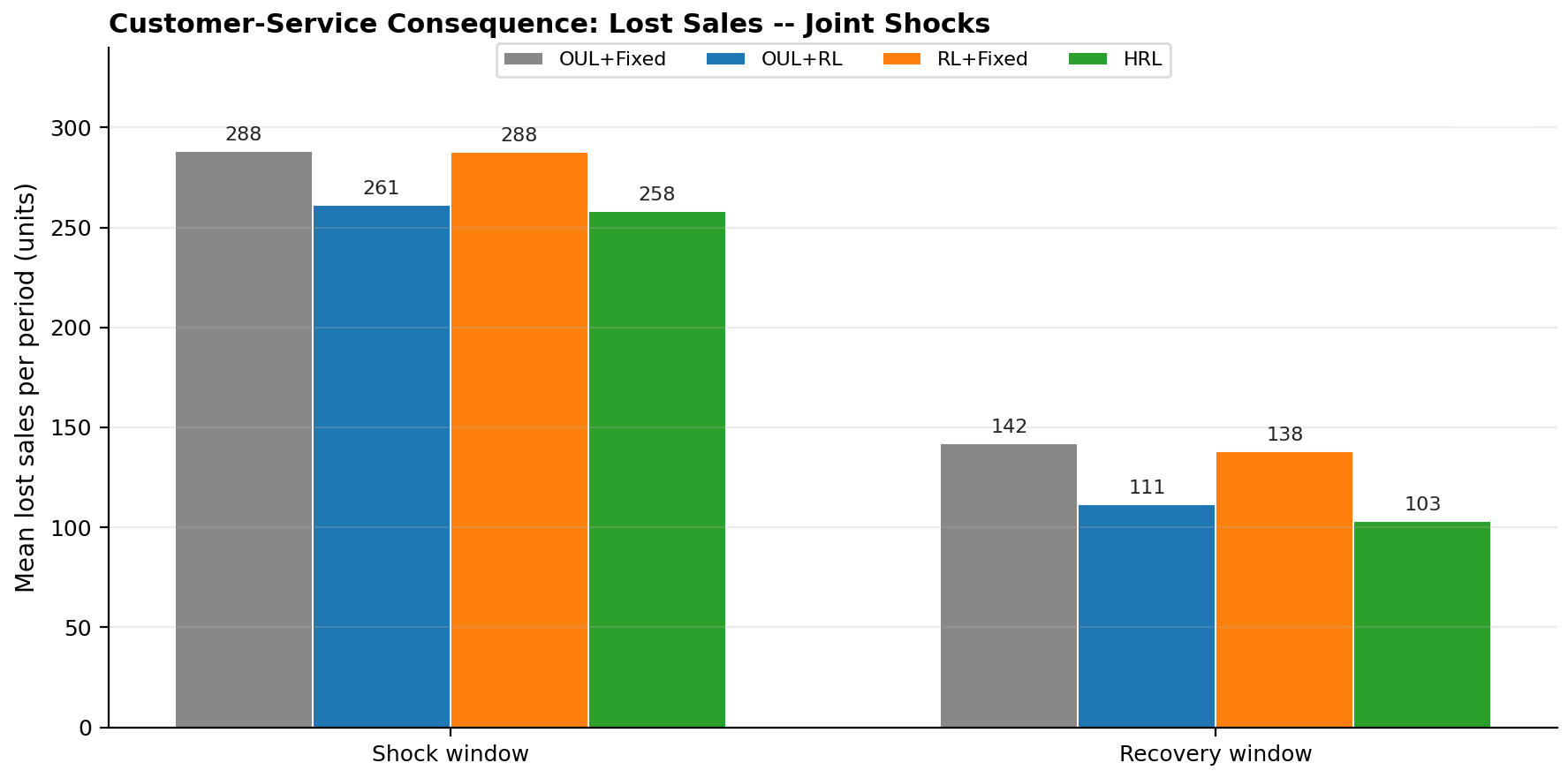}
  \caption{Lost sales during shock and recovery periods. Lower values indicate fewer unserved customers. Bars report event-aligned means for shocks beginning at or after period~1{,}500 across 30 seeds, with values reported above each bar.}
  \label{fig:exp1-lost-sales}
\end{figure}

Figure~\ref{fig:exp1-lost-sales} reports lost sales during shock and recovery periods under joint shocks. All policies have more lost sales during shock periods than during recovery, reflecting more limited inventory availability during disruptions. During shock periods, HRL records the fewest lost sales, with 258 units per period, compared with 261 for OUL+RL and 288 for the fixed-pricing policies. The gap is larger during recovery. HRL records 103 lost sales per period, compared with 111 for OUL+RL and 138--142 for the fixed-pricing policies. Thus, HRL pairs higher prices and larger inventories with fewer lost sales rather than achieving these outcomes by leaving more customers unserved.

Together, Figures~\ref{fig:exp1-price-inv}--\ref{fig:exp1-lost-sales} indicate that HRL's advantage under joint-shock does not come from “more inventory” alone or  “higher prices” alone. Instead, HRL combines higher selling prices, stronger inventory retention, and fewer lost sales during recovery. For dealers, this means the joint-shock advantage is not a single-metric effect, and it should be evaluated across price, inventory, and lost sales, since improving one outcome can weaken recovery if it comes at the expense of another. The observed pattern is consistent with a balanced response: pricing protects margins and moderates inventory depletion during scarcity, while replenishment preserves the inventory base needed to serve future demand. The next experiment examines whether this joint adjustment remains effective during a prolonged disruption.

\subsection{Experiment~2. Prolonged Joint Demand--Supply Shock}
\label{sec:exp2}

Experiment~1 shows that HRL performs most strongly when demand and supply shocks interact. Recent disruptions also show that shocks can persist long enough to deplete operating buffers, as illustrated by the COVID-19 pandemic and disruptions around the Strait of Hormuz \citep{ivanov2020predicting,unctad2026hormuz}. Experiment~2 therefore, studies a prolonged joint demand--supply shock. Two questions guide the analysis. First, \emph{what happens when a joint shock lasts long enough to deplete inventory buffers?} Second, \emph{does coordination create value beyond the separate contributions of learned pricing and learned replenishment?} We address these questions using a \(5{,}500\)-period horizon and introduce a prolonged shock during periods \(3{,}000\)--\(3{,}300\). This shock combines alternating demand surges and drops with simultaneous supply restrictions and class-specific supply caps. The resulting environment creates extended scarcity and tests whether each policy can price and replenish successfully over a longer disruption.

\subsubsection{Profit under the Prolonged Shock}

Table~\ref{tab:exp2-profit} reports mean profit over the full horizon and during the prolonged shock. The shock lowers profit for all four policies, reflecting the difficulty of simultaneous demand and supply disruptions. HRL earns the highest mean profit in both windows. During the shock, it earns \$210K per period, \$22K more than OUL+RL.

\begin{table}[htbp]
  \centering
  \caption{Experiment~2 under a prolonged joint demand--supply shock. Profit summary based on 30 seeds. The evaluation horizon covers periods 450--5{,}500, and the prolonged shock covers periods 3{,}000--3{,}300.}
  \label{tab:exp2-profit}
  \small
  \begin{tabular}{l r@{\;\,$\pm$\;\,}l r@{\;\,$\pm$\;\,}l}
    \toprule
    & \multicolumn{2}{c}{Full horizon (\$K per period)}
    & \multicolumn{2}{c}{Prolonged shock (\$K per period)} \\
    \midrule
    OUL+Fixed  & 182.3 & 0.1  & 152.1 & 0.5 \\
    OUL+RL     & 230.7 & 5.9  & 187.9 & 6.7 \\
    RL+Fixed   & 186.1 & 0.2  & 160.1 & 0.5 \\
    HRL        & \textbf{242.8} & 7.7 & \textbf{210.1} & 7.0 \\
    \bottomrule
  \end{tabular}
\end{table}

The benchmark comparisons also preserve a pattern from Experiment~1. During the prolonged shock, OUL+RL earns \$35.8K more than OUL+Fixed, whereas RL+Fixed earns only \$8.0K more. Pricing therefore remains the primary adaptation lever, but HRL's additional gain over OUL+RL shows that pricing alone does not explain its advantage.

HRL's margin over OUL+RL rises from \$12K per period over the full horizon to \$22K during the shock. Regular periods dilute the full-horizon difference, while the shock window isolates periods when adaptation matters most. OUL+Fixed, with no RL policy at either layer, earns the lowest mean profit in both windows. The larger shock-window margin is consistent with the cumulative role of replenishment. During a prolonged disruption, replenishment shapes inventory across many later arrivals, while pricing adapts within that inventory but cannot change future availability by itself. For dealers, this suggests that adaptive pricing may provide much of the immediate response, but coordinating pricing with replenishment becomes increasingly valuable as a joint disruption persists and inventory buffers erode.

Moreover, Figure~\ref{fig:exp2-weekly} reinforces the table-level result through the corresponding 52-period rolling profit paths. All four policies decline during periods $3,000-3,300$ as the prolonged joint shock unfolds. However, HRL remains above the other policies for much of the shock and returns to a higher profit level afterward. Thus, its advantage is sustained across the disruption and recovery rather than driven by a few high-profit periods. We next use \(2\times2\) contrasts to identify when HRL's joint value emerges across disruption and recovery phases.

\begin{figure}[htbp]
  \centering
  \includegraphics[width=0.99\linewidth]{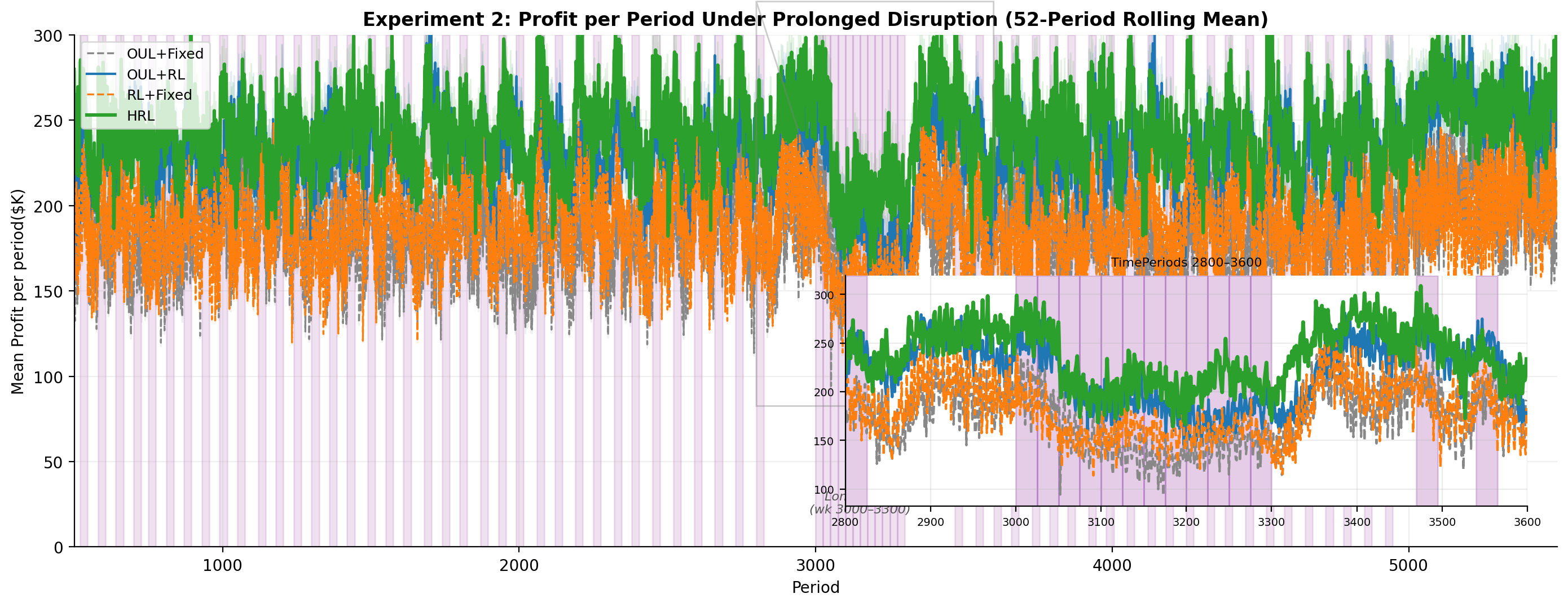}
  \caption{Mean profit per period over the \(5{,}500\)-period horizon with a 52-period rolling mean. The prolonged shock occurs during periods \(3{,}000\)--\(3{,}300\). Shaded bands show 95\% confidence intervals across 30 seeds.}
  \label{fig:exp2-weekly}
\end{figure}

\subsubsection{Two-by-Two Policy Contrasts}

We use the paired \(2\times2\) design to separate the profit gains from adapting replenishment, pricing, and both layers together. OUL+Fixed has no RL policy, OUL+RL adapts pricing, RL+Fixed adapts replenishment, and HRL adapts both. The replenishment and pricing contrasts measure the average gain from adapting each layer, averaging over the other layer. The interaction measures the additional profit from adapting both layers beyond the sum of their standalone gains. It is \(P(\mathrm{HRL})-P(\mathrm{OUL{+}RL})-P(\mathrm{RL{+}Fixed})+P(\mathrm{OUL{+}Fixed})\), where \(P(\cdot)\) denotes mean profit over the relevant period. A positive interaction indicates an additional joint gain, and a confidence interval excluding zero provides clear evidence of that gain.

Table~\ref{tab:exp2-contrasts} reports the three contrasts across evaluation periods. Figure~\ref{fig:exp2-interaction} complements the table with raw mean profits in Panel~(A) and \(2\times2\) policy profiles in Panels~(B)--(D). Parallel profiles indicate additive gains, while nonparallel profiles indicate an interaction.

\begin{table}[htbp]
  \centering
  \caption{Paired \(2\times2\) profit contrasts by evaluation period. Columns report the average gains from RL replenishment, the average gains from RL pricing, and the additional gain from adapting both layers. Values are in \$K per period with 95\% confidence intervals based on 30 paired seeds. Bold estimates have confidence intervals excluding zero.}
  \label{tab:exp2-contrasts}
  \small
  \setlength{\tabcolsep}{5pt}
  \begin{tabular}{@{}lccc@{}}
    \toprule
    Evaluation period
      & \shortstack{Average gain from\\RL replenishment}
      & \shortstack{Average gain from\\RL pricing}
      & \shortstack{Additional gain from\\joint adaptation} \\
    \midrule
    Pre-shock regular
      & \(8.2\) [\(-2.5\), \(19.0\)]
      & \(\mathbf{52.3}\) [\(43.7\), \(61.0\)]
      & \(6.4\) [\(-15.2\), \(27.9\)] \\
    Prolonged shock
      & \(\mathbf{15.1}\) [\(5.4\), \(24.8\)]
      & \(\mathbf{42.9}\) [\(34.4\), \(51.4\)]
      & \(14.2\) [\(-5.2\), \(33.6\)] \\
    \quad Demand-surge periods
      & \(\mathbf{26.3}\) [\(15.1\), \(37.6\)]
      & \(\mathbf{58.1}\) [\(48.5\), \(67.8\)]
      & \(10.7\) [\(-11.6\), \(33.0\)] \\
    \quad Demand-drop periods
      & \(3.9\) [\(-4.5\), \(12.2\)]
      & \(\mathbf{27.6}\) [\(20.0\), \(35.3\)]
      & \(\mathbf{17.7}\) [\(0.6\), \(34.8\)] \\
    \addlinespace[2pt]
    Recovery
      & \(\mathbf{30.5}\) [\(18.8\), \(42.3\)]
      & \(\mathbf{38.7}\) [\(28.5\), \(48.8\)]
      & \(\mathbf{53.4}\) [\(30.4\), \(76.4\)] \\
    \addlinespace[2pt]
    Post-shock regular
      & \(6.7\) [\(-3.7\), \(17.2\)]
      & \(\mathbf{54.9}\) [\(46.0\), \(63.9\)]
      & \(9.9\) [\(-10.9\), \(30.8\)] \\
    \bottomrule
  \end{tabular}
\end{table}

\begin{figure}[t]
  \centering
  \includegraphics[width=0.99\linewidth]{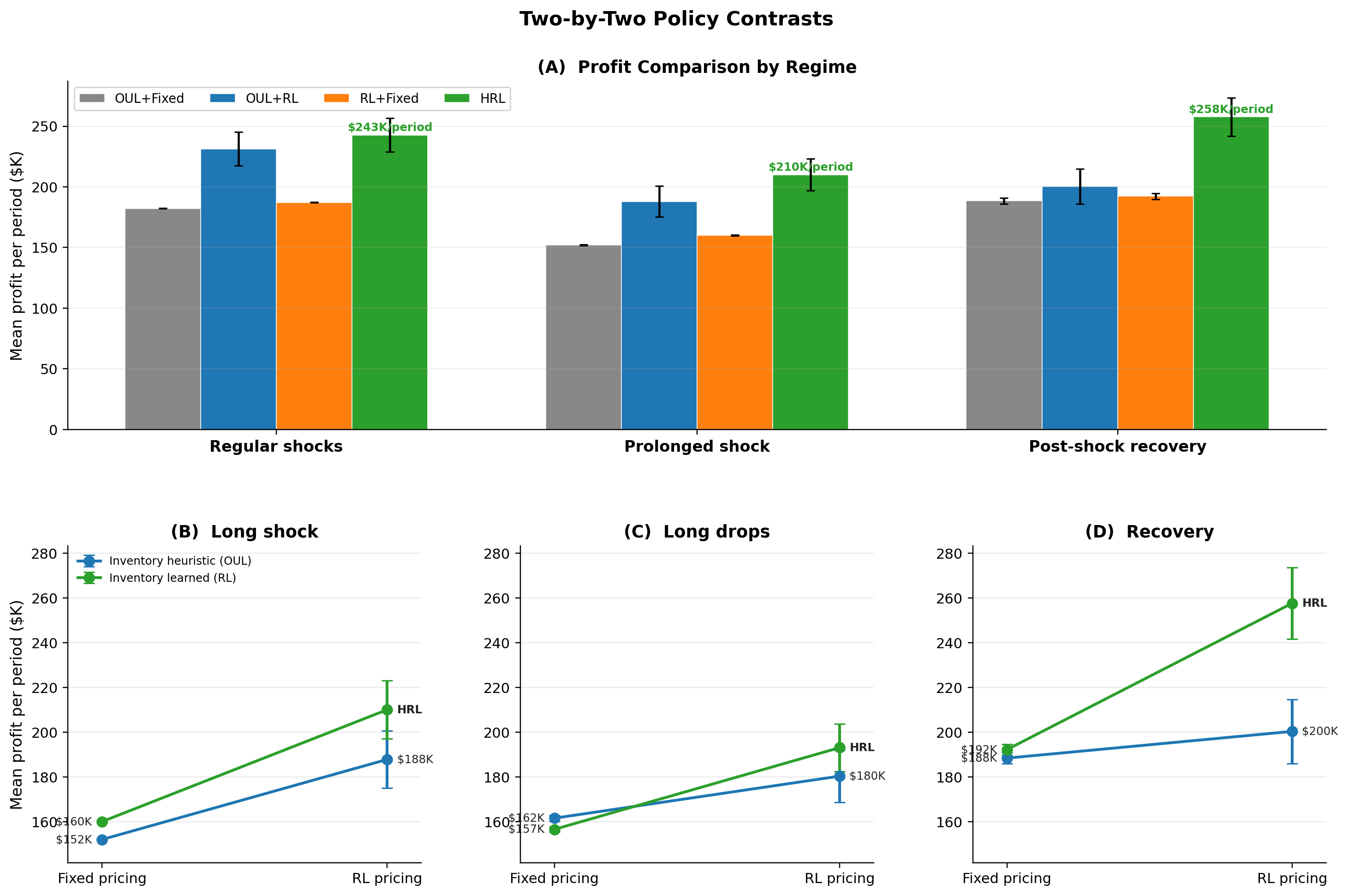}
  \caption{Experiment~2 under a prolonged joint shock. Paired \(2\times2\) policy profiles. Panel~(A) reports mean profit for the periods in Table~\ref{tab:exp2-contrasts}. Panels~(B)--(D) show the policy profiles for the prolonged shock, demand-drop, and recovery periods. Nonparallel lines indicate a nonzero interaction contrast.}
  \label{fig:exp2-interaction}
\end{figure}

First, RL pricing produces a clear profit gain in every evaluation period. The gain ranges from \$27.6K per period during demand drops to \$58.1K during demand surges, and every confidence interval excludes zero. Thus, short-term adaptation remains the most consistent source of profit improvement.

Second, RL replenishment creates clear gains during the prolonged shock, demand surges, and recovery, but not during regular or demand-drop periods. Its gain is largest during recovery at \$30.5K per period. This pattern reflects the cumulative role of replenishment, which shapes the inventory available across future customer arrivals.

Most importantly, the additional gain from adapting both layers becomes clear during demand drops and recovery. The interaction estimates for the full shock and demand surges are positive, but their confidence intervals include zero. In contrast, the interaction reaches \$17.7K per period during demand drops and \$53.4K during recovery, with both confidence intervals excluding zero. During recovery, replenishment changes the inventory available to future customers while pricing determines how that inventory is sold. The positive interaction indicates that adjusting the two decisions jointly creates more profit than the sum of adapting each layer separately.

For dealers, adaptive pricing is the primary tool in regular operations and one-sided disruptions; adaptive replenishment becomes important when shocks persist or recovery begins. HRL is most valuable when pricing protects near-term profit while replenishment restores future inventory.


\section{Discussion and Conclusion}
\label{sec:conclusion}

This paper studies resilient operations as an adaptive two-timescale policy-learning problem. We develop HRL to adapt long-term and short-term policies jointly as shocks change operating conditions. The policies act at their respective decision scales while working toward a shared objective.

Our used-car case study demonstrates how adaptation at both decision levels improves resilience. Short-term adaptation responds immediately to changing market conditions, whereas long-term adaptation reshapes the operating environment faced by future short-term decisions. Together, the two policies improve both operational performance and recovery from disruptions.

Four managerial insights emerge. First, the proposed HRL framework is modular and can be implemented incrementally or partially, allowing organizations to preserve existing planning and decision-making processes. This feature lowers implementation barriers and facilitates adoption across organizational units, where changes to established business processes are often difficult to achieve. Second, in stable operating environments, the HRL framework performs comparably to existing policies within statistical confidence bounds. This property is essential for practical adoption because it allows firms to improve resilience without sacrificing performance during normal operations. Third, the benefits of joint adaptation become substantial when demand and supply are disrupted simultaneously. Dynamic pricing alone is often sufficient to address routine fluctuations or one-sided shocks but coordinated pricing and inventory adaptation become increasingly valuable when disruptions affect both demand and supply. The greatest advantage of HRL occurs during the recovery phase, when replenishment decisions rebuild the inventory mix while pricing protects margins and guides demand. Managers should therefore view HRL not only as a disruption-response policy but also as a recovery policy that coordinates pricing, inventory availability, and demand fulfillment.

Finally, the operational benefits are economically meaningful. Compared with the strongest partially adaptive benchmark, the proposed framework increases mean profit by 9.2\% under joint demand–supply shocks and by 11.8\% under prolonged disruptions while simultaneously reducing profit variability. These results demonstrate that coordinated adaptation across hierarchical decision levels improves both profitability and resilience.
Several opportunities for future research remain. Extending the framework to additional hierarchical decision levels, multiple interacting products, competitive environments, and richer supply-chain settings would further broaden its applicability. Another promising direction is incorporating model-based forecasts or causal information into the learning process to accelerate adaptation under rapidly changing operating conditions.
}

\clearpage
  \bibliographystyle{plainnat}
  \bibliography{ref}

\beginsupplementary
\thispagestyle{firstpagestyle}

\begin{center}
{\fontsize{21}{25}\selectfont \supplementaryname\par}
\vspace{0.65em}
{\fontsize{15}{18}\selectfont
``Two-Timescale Hierarchical Reinforcement Learning
for Resilient Operations''\par}
\end{center}

\vspace{1.1em}

\IfFileExists{supplementary.tex}{%
  \input{supplementary}
}{%
This Supplementary provides proofs and implementation details supporting the main paper. Section~\ref{app:proof_theorem1} proves Theorem~\ref{thm:global_convergence_generic}. Section~\ref{app:proof_accelerated_convergence} proves Theorem~\ref{thm:rate_improve_average}, including the accelerated average-gap bound under market sharpness. Section~\ref{app:case_study_details} describes the simulation environment, demand and inventory dynamics, policy implementations, training protocol, and statistical reporting for the used-car case study. Section~\ref{sec:additional_numerical} presents controlled numerical illustrations of the mechanisms underlying Theorems~\ref{thm:global_convergence_generic} and~\ref{thm:rate_improve_average}.

\section{Proof of Theorem~\ref{thm:global_convergence_generic}}
\label{app:proof_theorem1}

\begin{proof}[Proof of Theorem~\ref{thm:global_convergence_generic}]
We prove the result for the population-level policy-improvement sequence described in the main text. The empirical algorithm uses parameterized PPO-Clip with sampled advantages. The argument below abstracts from sampling, function approximation, and finite inner-optimization errors, and studies exact advantage directions. 

Throughout the proof, Assumptions~1--3 hold. The long-term state space, short-term state space, and feasible action sets are finite. Rewards and costs are bounded by \(B\), all policies used in the learning sequence assign positive probability to feasible actions, and exact long-term and short-term advantages are used. The short-term horizon within one long-term period is \(K\), while \(M\ge1\) denotes the number of short-term policy updates performed within one long-term period. The quantities \(K\) and \(M\) are fixed over the learning horizon \(T\).

At a decision node, the population update takes the form
\[
p^+(a)
=
\frac{p(a)\exp\{\eta A(a)\}}
{\sum_b p(b)\exp\{\eta A(b)\}} .
\]
Let \(Z_p(\eta)=\sum_b p(b)\exp\{\eta A(b)\}\). For any comparator \(q\),
\[
\begin{aligned}
D_{\mathrm{KL}}(q\|p^+)-D_{\mathrm{KL}}(q\|p)=
\sum_a q(a)\log\frac{p(a)}{p^+(a)} = -\eta\sum_a q(a)A(a)+\log Z_p(\eta).
\end{aligned}
\]
If \(\|A\|_\infty\le G\), Hoeffding's lemma gives \[ \log Z_p(\eta) = \log \E_{a\sim p}\{\exp(\eta A(a))\} \le \eta\sum_a p(a)A(a)+\frac{\eta^2G^2}{2}. \] Hence, \[ D_{\mathrm{KL}}(q\|p^+)-D_{\mathrm{KL}}(q\|p) \le -\eta\langle A,q-p\rangle + \frac{\eta^2G^2}{2}. \]
The update also satisfies a movement bound. Since $\frac{p^+(a)}{p(a)}=\frac{\exp\{\eta A(a)\}}{Z_p(\eta)}$ and \(e^{-\eta G}\le Z_p(\eta)\le e^{\eta G}\), we have \(e^{-2\eta G}\le p^+(a)/p(a)\le e^{2\eta G}\). Hence, for any stepsize range \(\eta\le\bar\eta\),
\[
\|p^+-p\|_1
=
\sum_a p(a)\left|\frac{p^+(a)}{p(a)}-1\right|
\le
e^{2\eta G}-1
\le
2Ge^{2\bar\eta G}\eta.
\]
In the short-term-policy bounds below, \(B_{\mathrm f}\) denotes the corresponding finite movement constant, independent of \(T\).

\noindent
\textbf{Step 1. Uniform value and advantage bounds.} The period payoff is \(R_t=\sum_{k=0}^{K-1}r_{t,k}-c(x_t,u_t)\). Since \(|r_{t,k}|\le B\) and \(|c(x_t,u_t)|\le B\), we have \(|R_t|\le (K+1)B\). Write \(R_B=(K+1)B\). For every stationary joint policy \((\pi,\phi)\), \[ \|V_{\mathrm s}^{\pi,\phi}\|_\infty \le \frac{R_B}{1-\Gamma}, \qquad \|A_{\mathrm s}^{\pi,\phi}\|_\infty \le \frac{2R_B}{1-\Gamma}. \] The long-term update uses the scaled long-term direction \(g_t^{\mathrm s}=A_{\mathrm s}^{\pi_t,\phi_{t+1}}/(1-\Gamma)\), so \[ \|g_t^{\mathrm s}\|_\infty \le G_{\mathrm s} \triangleq \frac{2R_B}{(1-\Gamma)^2}. \] For the short-term problem, a short-term advantage includes remaining within-period rewards and the terminal continuation value \(\Gamma V_{\mathrm s}^{\pi_t,\phi}(x_{t+1})\). Since the short-term horizon is \(K\), \[ \|A_{\mathrm f}^{\pi_t,\phi}\|_\infty \le G_{\mathrm f} \triangleq 2\left(KB+\frac{\Gamma R_B}{1-\Gamma}\right) \] for every short-term policy \(\phi\) used in the proof. Write \(\phi_{t,0}=\phi_t\), \(\phi_{t,M}=\phi_{t+1}\), and \(g_{t,m}^{\mathrm f}=A_{\mathrm f}^{\pi_t,\phi_{t,m}}\).

\noindent
\textbf{Step 2. Variation of the short-term advantage.}
We next derive the finite-horizon variation bound used later in the proof. Fix a long-term policy \(\pi\) and two short-term policies \(\phi_1\) and \(\phi_2\). Let
\[
\delta_\phi
=
\|\phi_1-\phi_2\|_{1,\infty}
=
\sup_{s,u,k}
\sum_a
|\phi_1(a\mid s,u,k)-\phi_2(a\mid s,u,k)|.
\]
For \(i=1,2\), let \(\bar r_i\) and \(\bar P_i\) denote the one-period expected payoff vector and long-term transition matrix induced by \((\pi,\phi_i)\). Since \(K\) is finite and the state and action spaces are finite, \(\bar r_i\) and \(\bar P_i\) are finite sums over within-period trajectories of length \(K\). For a fixed within-period action sequence, the short-term action probabilities satisfy
\[
\left|
\prod_{k=0}^{K-1}\phi_1(a_k\mid s_k,u,k)
-
\prod_{k=0}^{K-1}\phi_2(a_k\mid s_k,u,k)
\right|
\le
\sum_{\ell=0}^{K-1}
|\phi_1(a_\ell\mid s_\ell,u,\ell)-\phi_2(a_\ell\mid s_\ell,u,\ell)|.
\]
All remaining transition probabilities are bounded by one. Since the number of within-period states and actions is finite, there exist finite constants \(C_{r,K}\) and \(C_{P,K}\), independent of \(T\), such that
\[
\|\bar r_1-\bar r_2\|_\infty
\le
C_{r,K}\delta_\phi,
\qquad
\|\bar P_1-\bar P_2\|_\infty
\le
C_{P,K}\delta_\phi.
\]

Let \(V_i=V_{\mathrm s}^{\pi,\phi_i}\). The long-term value equations are
\[
V_i=\bar r_i+\Gamma \bar P_iV_i,
\qquad i=1,2.
\]
Subtracting the two equations gives
\[
V_1-V_2
=
(I-\Gamma\bar P_1)^{-1}
\{(\bar r_1-\bar r_2)+\Gamma(\bar P_1-\bar P_2)V_2\}.
\]
Because \(\|(I-\Gamma\bar P_1)^{-1}\|_\infty\le(1-\Gamma)^{-1}\) and \(\|V_2\|_\infty\le R_B/(1-\Gamma)\),
\[
\begin{aligned}
\|V_1-V_2\|_\infty
\le
\frac{1}{1-\Gamma}
\left\{
\|\bar r_1-\bar r_2\|_\infty
+
\Gamma\|\bar P_1-\bar P_2\|_\infty\|V_2\|_\infty
\right\}  \le
\frac{1}{1-\Gamma}
\left\{
C_{r,K}
+
\frac{\Gamma R_B}{1-\Gamma}C_{P,K}
\right\}
\delta_\phi.
\end{aligned}
\]
Thus there is a finite constant \(L_V<\infty\), independent of \(T\), such that
\[
\|V_{\mathrm s}^{\pi,\phi_1}-V_{\mathrm s}^{\pi,\phi_2}\|_\infty
\le
L_V\delta_\phi.
\]

We now pass this bound to the short-term advantage. For \(i=1,2\), let \(W_{i,k}(s,u)\) be the short-term continuation value at position \(k\), and let \(Q_{i,k}(s,u,a)\) be the corresponding short-term action value under \((\pi,\phi_i)\). The terminal value is
\[
W_{i,K}(s,u)
=
\Gamma\sum_{x'}P_{\mathrm s}(x'\mid s,u)V_i(x').
\]
For \(k=0,\ldots,K-1\),
\[
Q_{i,k}(s,u,a)
=
r_k(s,a,u)
+
\sum_{s'}P_{\mathrm f,k}(s'\mid s,a,u)W_{i,k+1}(s',u),
\]
and
\[
W_{i,k}(s,u)
=
\sum_a
\phi_i(a\mid s,u,k)Q_{i,k}(s,u,a).
\]
Let \(Q_B=KB+\Gamma R_B/(1-\Gamma)\). Then \(\|Q_{i,k}\|_\infty\le Q_B\). From the terminal condition,
\[
\|W_{1,K}-W_{2,K}\|_\infty
\le
\Gamma\|V_1-V_2\|_\infty
\le
\Gamma L_V\delta_\phi.
\]
For \(k=K-1,\ldots,0\),
\[
\|Q_{1,k}-Q_{2,k}\|_\infty
\le
\|W_{1,k+1}-W_{2,k+1}\|_\infty,
\]
and
\[
\begin{aligned}
\|W_{1,k}-W_{2,k}\|_\infty
&\le
\sup_{s,u}
\left|
\sum_a
\{\phi_1(a\mid s,u,k)-\phi_2(a\mid s,u,k)\}
Q_{1,k}(s,u,a)
\right|  \\
&\quad+
\sup_{s,u}
\left|
\sum_a
\phi_2(a\mid s,u,k)
\{Q_{1,k}(s,u,a)-Q_{2,k}(s,u,a)\}
\right| \\
&\le
Q_B\delta_\phi
+
\|Q_{1,k}-Q_{2,k}\|_\infty.
\end{aligned}
\]
Backward induction over the finite horizon \(K\) gives a finite constant \(C_{Q,K}<\infty\), independent of \(T\), such that
\[
\max_{0\le k\le K}
\|W_{1,k}-W_{2,k}\|_\infty
+
\max_{0\le k\le K-1}
\|Q_{1,k}-Q_{2,k}\|_\infty
\le
C_{Q,K}\delta_\phi.
\]
Since \(A_{i,k}(s,u,a)=Q_{i,k}(s,u,a)-W_{i,k}(s,u)\),
\[
\|A_{\mathrm f}^{\pi,\phi_1}-A_{\mathrm f}^{\pi,\phi_2}\|_\infty
\le
2C_{Q,K}\delta_\phi.
\]
Writing \(L_{A,K}=2C_{Q,K}\), we obtain
\[
\|A_{\mathrm f}^{\pi,\phi_1}-A_{\mathrm f}^{\pi,\phi_2}\|_\infty
\le
L_{A,K}
\|\phi_1-\phi_2\|_{1,\infty}.
\]
The constant \(L_{A,K}\) may depend on \(K\), \(B\), \(\Gamma\), the finite state and action spaces, and the transition probabilities, but not on \(T\).

\noindent
\textbf{Step 3. Joint performance-difference decomposition.}
Fix the learning iteration \(t\). Let \(\Pr^*\) and \(\E^*\) denote probability and expectation under the benchmark pair \((\pi^*,\phi^*)\). Let \(V_{\mathrm s}^t=V_{\mathrm s}^{\pi_t,\phi_{t+1}}\). Since \(\mathcal L(\pi_t,\phi_{t+1})=\E_{\rho_0}\{V_{\mathrm s}^t(x_0)\}\),
\[
\Gap_t
=
\E^*
\left[
\sum_{\tau\ge0}\Gamma^\tau R_\tau
-
V_{\mathrm s}^t(x_0)
\right].
\]
The discounted telescoping identity gives
\[
\begin{aligned}
\E^*
\left[
\sum_{\tau\ge0}\Gamma^\tau R_\tau
-
V_{\mathrm s}^t(x_0)
\right]
&=
\E^*
\left[
\sum_{\tau\ge0}\Gamma^\tau
\{R_\tau+\Gamma V_{\mathrm s}^t(x_{\tau+1})-V_{\mathrm s}^t(x_\tau)\}
\right].
\end{aligned}
\]
The terminal term vanishes because \(\Gamma<1\) and \(V_{\mathrm s}^t\) is bounded.

Let \(A_{\mathrm s}^t\) and \(A_{\mathrm f}^t\) denote the long-term and short-term advantages evaluated under the post-short-term-update pair \((\pi_t,\phi_{t+1})\). Let \(W_k^t(s,u)\) be the short-term continuation value at position \(k\) under this pair, with terminal condition \[ W_K^t(s,u) = \Gamma\sum_{x'}P_{\mathrm s}(x'\mid s,u)V_{\mathrm s}^t(x'). \] Let \(Q_{\mathrm f,k}^t(s,u,a)=r_k(s,a,u)+\sum_{s'}P_{\mathrm f,k}(s'\mid s,a,u)W_{k+1}^t(s',u)\), so that \(A_{\mathrm f}^t(s,u,k,a)=Q_{\mathrm f,k}^t(s,u,a)-W_k^t(s,u)\). Also define \[ Q_{\mathrm s}^t(x,u) = -c(x,u) + \sum_{s_0}P_{\mathrm{init}}(s_0\mid x,u)W_0^t(s_0,u), \qquad A_{\mathrm s}^t(x,u)=Q_{\mathrm s}^t(x,u)-V_{\mathrm s}^t(x). \] short-term-level Bellman telescoping gives, after averaging over the initialized short-term state and the short-term transitions conditional on \((x_\tau,u_\tau)\), \[ \E^* \left[ R_\tau+\Gamma V_{\mathrm s}^t(x_{\tau+1})-V_{\mathrm s}^t(x_\tau) \mid x_\tau,u_\tau \right] = A_{\mathrm s}^{t}(x_\tau,u_\tau) + \E^* \left[ \sum_{k=0}^{K-1} A_{\mathrm f}^{t}(s_{\tau,k},u_\tau,k,a_{\tau,k}) \mid x_\tau,u_\tau \right]. \] 
Therefore, 
\[ 
\Gap_t = \E^* \left[ \sum_{\tau\ge0}\Gamma^\tau \left\{ A_{\mathrm s}^{t}(x_\tau,u_\tau) + \sum_{k=0}^{K-1} A_{\mathrm f}^{t}(s_{\tau,k},u_\tau,k,a_{\tau,k}) \right\} \right]. 
\]

Let \(d_{\mathrm s}^*\) be the normalized discounted long-term occupancy under the benchmark,
\[
d_{\mathrm s}^*(x)
=
(1-\Gamma)
\sum_{\tau\ge0}
\Gamma^\tau
\Pr^*(x_\tau=x),
\]
and let \(d_{\mathrm f}^*\) be the benchmark occupancy over short-term decision nodes, averaged over short-term positions,
\[
d_{\mathrm f}^*(s,u,k)
=
\frac{1-\Gamma}{K}
\sum_{\tau\ge0}
\Gamma^\tau
\Pr^*(s_{\tau,k}=s,u_\tau=u).
\]
For any long-term action distribution \(\alpha\) and any long-term function \(h\), define
\[
\langle h,\alpha-\pi_t\rangle_{\mathrm s}
=
\sum_x d_{\mathrm s}^*(x)
\sum_u
\{\alpha(u\mid x)-\pi_t(u\mid x)\}h(x,u).
\]
For any short-term action distribution \(\alpha\) and any short-term function \(H\), define
\[
\langle H,\alpha-\phi\rangle_{\mathrm f}
=
\sum_{s,u,k}d_{\mathrm f}^*(s,u,k)
\sum_a
\{\alpha(a\mid s,u,k)-\phi(a\mid s,u,k)\}H(s,u,k,a).
\]
At each long-term state, \(\sum_u\pi_t(u\mid x)A_{\mathrm s}^t(x,u)=0\). At each short-term decision node, \(\sum_a\phi_{t+1}(a\mid s,u,k)A_{\mathrm f}^t(s,u,k,a)=0\). Hence
\[
\E_{u\sim\pi^*(\cdot\mid x)}
A_{\mathrm s}^t(x,u)
=
\sum_u
\{\pi^*(u\mid x)-\pi_t(u\mid x)\}
A_{\mathrm s}^t(x,u),
\]
and
\[
\E_{a\sim\phi^*(\cdot\mid s,u,k)}
A_{\mathrm f}^t(s,u,k,a)
=
\sum_a
\{\phi^*(a\mid s,u,k)-\phi_{t+1}(a\mid s,u,k)\}
A_{\mathrm f}^t(s,u,k,a).
\]
Using the definitions of \(d_{\mathrm s}^*\) and \(d_{\mathrm f}^*\),
\[
\Gap_t
=
\frac{1}{1-\Gamma}
\left\langle A_{\mathrm s}^{t},\pi^*-\pi_t\right\rangle_{\mathrm s}
+
\frac{K}{1-\Gamma}
\left\langle A_{\mathrm f}^{t},\phi^*-\phi_{t+1}\right\rangle_{\mathrm f}.
\]
Using \(g_t^{\mathrm s}=A_{\mathrm s}^t/(1-\Gamma)\), \(g_{t,M}^{\mathrm f}=A_{\mathrm f}^t\), and \(\phi_{t,M}=\phi_{t+1}\), we obtain
\[
\Gap_t
=
\left\langle g_t^{\mathrm s},\pi^*-\pi_t\right\rangle_{\mathrm s}
+
\frac{K}{1-\Gamma}
\left\langle g_{t,M}^{\mathrm f},\phi^*-\phi_{t,M}\right\rangle_{\mathrm f}.
\]

\noindent
\textbf{Step 4. Long-term and short-term KL drift.} Define \[ \Phi(\pi,\phi) = \WKL{d_{\mathrm s}^*}(\pi^*\|\pi) + \WKL{d_{\mathrm f}^*}(\phi^*\|\phi), \qquad \Phi_t=\Phi(\pi_t,\phi_t). \] The one-period drift is \[ \Phi_{t+1}-\Phi_t = \{\Phi(\pi_{t+1},\phi_{t+1})-\Phi(\pi_t,\phi_{t+1})\} + \{\Phi(\pi_t,\phi_{t+1})-\Phi(\pi_t,\phi_t)\}. \] Applying the one-step inequality to the long-term update and averaging under \(d_{\mathrm s}^*\) gives \[ \Phi(\pi_{t+1},\phi_{t+1})-\Phi(\pi_t,\phi_{t+1}) \le -\eta_{\mathrm s} \left\langle g_t^{\mathrm s},\pi^*-\pi_t\right\rangle_{\mathrm s} + \frac{\eta_{\mathrm s}^2G_{\mathrm s}^2}{2}. \] Using the decomposition from Step 3, 
\[ \begin{aligned} \Phi(\pi_{t+1},\phi_{t+1})-\Phi(\pi_t,\phi_{t+1}) \le -\eta_{\mathrm s}\Gap_t + \eta_{\mathrm s}\frac{K}{1-\Gamma} \left\langle g_{t,M}^{\mathrm f},\phi^*-\phi_{t,M}\right\rangle_{\mathrm f}+ \frac{\eta_{\mathrm s}^2G_{\mathrm s}^2}{2}. \end{aligned} \] 

For the short-term updates, summing the one-step inequality over \(m=0,\ldots,M-1\) gives 

\[ 
\begin{aligned} \Phi(\pi_t,\phi_{t+1})-\Phi(\pi_t,\phi_t) \le -\eta_{\mathrm f} \sum_{m=0}^{M-1} \left\langle g_{t,m}^{\mathrm f},\phi^*-\phi_{t,m}\right\rangle_{\mathrm f} + \frac{M\eta_{\mathrm f}^2G_{\mathrm f}^2}{2}. \end{aligned} 
\] The movement bound gives \(\|\phi_{t,m+1}-\phi_{t,m}\|_{1,\infty}\le B_{\mathrm f}\eta_{\mathrm f}\). Hence, for any \(m\le M\), \[ \|\phi_{t,M}-\phi_{t,m}\|_{1,\infty} \le \sum_{\ell=m}^{M-1} \|\phi_{t,\ell+1}-\phi_{t,\ell}\|_{1,\infty} \le (M-m)B_{\mathrm f}\eta_{\mathrm f}. \]

\noindent
\textbf{Step 5. Synchronization and residual control.}
Impose the synchronization condition \(M\eta_{\mathrm f}=\{K/(1-\Gamma)\}\eta_{\mathrm s}\). Combining the long-term and short-term drift bounds from Step 4 gives
\[
\begin{aligned}
\Phi_{t+1}-\Phi_t
\le
-\eta_{\mathrm s}\Gap_t
+
\frac{\eta_{\mathrm s}^2G_{\mathrm s}^2}{2}
+
\frac{M\eta_{\mathrm f}^2G_{\mathrm f}^2}{2}
+\eta_{\mathrm s}\frac{K}{1-\Gamma}
\left\langle g_{t,M}^{\mathrm f},\phi^*-\phi_{t,M}\right\rangle_{\mathrm f}-
\eta_{\mathrm f}
\sum_{m=0}^{M-1}
\left\langle g_{t,m}^{\mathrm f},\phi^*-\phi_{t,m}\right\rangle_{\mathrm f}.
\end{aligned}
\]
Since \(M\eta_{\mathrm f}=\{K/(1-\Gamma)\}\eta_{\mathrm s}\),
\[
\eta_{\mathrm s}\frac{K}{1-\Gamma}
\left\langle g_{t,M}^{\mathrm f},\phi^*-\phi_{t,M}\right\rangle_{\mathrm f}
=
\eta_{\mathrm f}
\sum_{m=0}^{M-1}
\left\langle g_{t,M}^{\mathrm f},\phi^*-\phi_{t,M}\right\rangle_{\mathrm f}.
\]
Thus the remaining short-term first-order term is the residual
\[
\mathcal R_t
=
\eta_{\mathrm f}
\sum_{m=0}^{M-1}
\left[
\left\langle g_{t,M}^{\mathrm f},\phi^*-\phi_{t,M}\right\rangle_{\mathrm f}
-
\left\langle g_{t,m}^{\mathrm f},\phi^*-\phi_{t,m}\right\rangle_{\mathrm f}
\right].
\]
For a fixed \(m\), add and subtract
\(\langle g_{t,M}^{\mathrm f},\phi^*-\phi_{t,m}\rangle_{\mathrm f}\). Then
\[
\begin{aligned}
\left|
\left\langle g_{t,M}^{\mathrm f},\phi^*-\phi_{t,M}\right\rangle_{\mathrm f}
-
\left\langle g_{t,m}^{\mathrm f},\phi^*-\phi_{t,m}\right\rangle_{\mathrm f}
\right| \le
\left|
\left\langle g_{t,M}^{\mathrm f}-g_{t,m}^{\mathrm f},\phi^*-\phi_{t,m}\right\rangle_{\mathrm f}
\right|
+
\left|
\left\langle g_{t,M}^{\mathrm f},\phi_{t,m}-\phi_{t,M}\right\rangle_{\mathrm f}
\right|.
\end{aligned}
\]
By Step 2 and the movement bound,
\[
\begin{aligned}
\|g_{t,M}^{\mathrm f}-g_{t,m}^{\mathrm f}\|_\infty=
\|A_{\mathrm f}^{\pi_t,\phi_{t,M}}-A_{\mathrm f}^{\pi_t,\phi_{t,m}}\|_\infty
\le
L_{A,K}\|\phi_{t,M}-\phi_{t,m}\|_{1,\infty} 
\le
L_{A,K}(M-m)B_{\mathrm f}\eta_{\mathrm f}.
\end{aligned}
\]
Since \(\|\phi^*-\phi_{t,m}\|_1\le2\) at each short-term decision node,
\[
\left|
\left\langle g_{t,M}^{\mathrm f}-g_{t,m}^{\mathrm f},\phi^*-\phi_{t,m}\right\rangle_{\mathrm f}
\right|
\le
2L_{A,K}(M-m)B_{\mathrm f}\eta_{\mathrm f}.
\]
The second term is bounded by \(\|g_{t,M}^{\mathrm f}\|_\infty\le G_{\mathrm f}\) and the movement bound,
\[
\left|
\left\langle g_{t,M}^{\mathrm f},\phi_{t,m}-\phi_{t,M}\right\rangle_{\mathrm f}
\right|
\le
G_{\mathrm f}(M-m)B_{\mathrm f}\eta_{\mathrm f}.
\]
Therefore,
\[
\begin{aligned}
&
\left|
\left\langle g_{t,M}^{\mathrm f},\phi^*-\phi_{t,M}\right\rangle_{\mathrm f}
-
\left\langle g_{t,m}^{\mathrm f},\phi^*-\phi_{t,m}\right\rangle_{\mathrm f}
\right| 
\le
B_{\mathrm f}(2L_{A,K}+G_{\mathrm f})(M-m)\eta_{\mathrm f}.
\end{aligned}
\]
Summing over \(m\),
\[
\begin{aligned}
|\mathcal R_t|
&\le
\eta_{\mathrm f}
\sum_{m=0}^{M-1}
B_{\mathrm f}(2L_{A,K}+G_{\mathrm f})(M-m)\eta_{\mathrm f} 
\le
B_{\mathrm f}(2L_{A,K}+G_{\mathrm f})M^2\eta_{\mathrm f}^2 
=
\left(\frac{K}{1-\Gamma}\right)^2
B_{\mathrm f}(2L_{A,K}+G_{\mathrm f})
\eta_{\mathrm s}^2.
\end{aligned}
\]
The short-term second-order term satisfies
\[
\frac{M\eta_{\mathrm f}^2G_{\mathrm f}^2}{2}
=
\left(\frac{K}{1-\Gamma}\right)^2
\frac{\eta_{\mathrm s}^2G_{\mathrm f}^2}{2M}.
\]
Taking expectation if the update contains external randomness,
\[
\E[\Phi_{t+1}-\Phi_t]
\le
-\eta_{\mathrm s}\E[\Gap_t]
+
C_1\eta_{\mathrm s}^2,
\]
where one valid finite constant is
\[
C_1
=
\frac{G_{\mathrm s}^2}{2}
+
\left(\frac{K}{1-\Gamma}\right)^2
\frac{G_{\mathrm f}^2}{2M}
+
\left(\frac{K}{1-\Gamma}\right)^2
B_{\mathrm f}(2L_{A,K}+G_{\mathrm f}).
\]
The constant \(C_1\) may depend on \(K\), \(M\), \(\Gamma\), \(B\), the finite state and action spaces, the transition probabilities, and \(L_{A,K}\), but not on \(T\). Since the state component \(Z_t\) can affect rewards, feasible actions, and transition probabilities, shocks enter \(C_1\) through these fixed problem quantities.

\noindent
\textbf{Step 6. Telescoping the drift.} Rearranging the drift inequality gives 
\[ 
\eta_{\mathrm s}\E[\Gap_t] \le \E[\Phi_t]-\E[\Phi_{t+1}] + C_{1}\eta_{\mathrm s}^2. 
\] Summing over \(t=0,\ldots,T-1\), 
\[ 
\eta_{\mathrm s} \sum_{t=0}^{T-1} \E[\Gap_t] \le \E[\Phi_0]-\E[\Phi_T] + TC_{1}\eta_{\mathrm s}^2. 
\] 
Since \(\Phi_T\ge0\), 
\[ \eta_{\mathrm s} \sum_{t=0}^{T-1} \E[\Gap_t] \le \Phi_0 + TC_{1}\eta_{\mathrm s}^2. 
\] 

Let \(C_0=\Phi_0\). Dividing by \(T\eta_{\mathrm s}\) gives
\[
\frac{1}{T}
\sum_{t=0}^{T-1}
\E[\Gap_t]
\le
\frac{C_0}{T\eta_{\mathrm s}}
+
C_1\eta_{\mathrm s}.
\]
With \(\eta_{\mathrm s}=T^{-1/2}\),
\[
\frac{1}{T}
\sum_{t=0}^{T-1}
\E[\Gap_t]
\le
\frac{C_0+C_1}{\sqrt T}.
\]

Since \(C_0\) and \(C_1\) are finite and independent of \(T\), the claimed bound follows.
\end{proof}
\section{Proof of Theorem~\ref{thm:rate_improve_average}}
\label{app:proof_accelerated_convergence}

\begin{proof}[Proof of Theorem~\ref{thm:rate_improve_average}]
The proof refines the drift bound from Theorem~\ref{thm:global_convergence_generic} using Assumption~4. Let
\[
\Delta_t=\E[\Phi(\pi_t,\phi_t)],
\qquad
\bar\Gap_t=\E[\Gap_t].
\]
The proof of Theorem~\ref{thm:global_convergence_generic} applies to time-varying learning rates when the synchronization condition is imposed at every long-term period as \(M\eta_{\mathrm f,t}=\{K/(1-\Gamma)\}\eta_{\mathrm s,t}\). Hence there is a finite constant \(C_1\), independent of \(t\) and \(T\), such that
\[
\Delta_{t+1}
\le
\Delta_t-\eta_{\mathrm s,t}\bar\Gap_t+C_1\eta_{\mathrm s,t}^2.
\]

\noindent
\textbf{Step 1. Relate the post-short-term-update gap to Assumption~4.}
Assumption~4 applies to the pair \((\pi_t,\phi_t)\), while \(\Gap_t\) is evaluated at \((\pi_t,\phi_{t+1})\). We first control this difference.

By the finite-horizon variation argument in the proof of Theorem~\ref{thm:global_convergence_generic}, there exists a finite constant \(L_{\mathcal L,K}\), independent of \(T\), such that for any long-term policy \(\pi\) and any two short-term policies \(\phi_1,\phi_2\),
\[
\left|
\mathcal L(\pi,\phi_1)-\mathcal L(\pi,\phi_2)
\right|
\le
L_{\mathcal L,K}
\|\phi_1-\phi_2\|_{1,\infty}.
\]
The short-term movement bound from Theorem~\ref{thm:global_convergence_generic} gives $\|\phi_{t+1}-\phi_t\|_{1,\infty}
\le
M B_{\mathrm f}\eta_{\mathrm f,t}
=
\frac{K}{1-\Gamma}B_{\mathrm f}\eta_{\mathrm s,t},$ and therefore,
\[
\left|
\mathcal L(\pi_t,\phi_t)-\mathcal L(\pi_t,\phi_{t+1})
\right|
\le
L_{\mathcal L,K}\frac{K}{1-\Gamma}B_{\mathrm f}\eta_{\mathrm s,t}.
\]
Let \(c_K=L_{\mathcal L,K}\{K/(1-\Gamma)\}B_{\mathrm f}\). Then
\[
\mathcal L(\pi_t,\phi_t)-\mathcal L(\pi_t,\phi_{t+1})
\ge
-c_K\eta_{\mathrm s,t}.
\]
Adding and subtracting \(\mathcal L(\pi_t,\phi_t)\),
\[
\begin{aligned}
\Gap_t
&=
\mathcal L(\pi^*,\phi^*)-\mathcal L(\pi_t,\phi_{t+1}) =
\{\mathcal L(\pi^*,\phi^*)-\mathcal L(\pi_t,\phi_t)\}
+
\{\mathcal L(\pi_t,\phi_t)-\mathcal L(\pi_t,\phi_{t+1})\} \\
&\ge
\{\mathcal L(\pi^*,\phi^*)-\mathcal L(\pi_t,\phi_t)\}
-
c_K\eta_{\mathrm s,t}.
\end{aligned}
\]
As Assumption~4 gives $\mathcal L(\pi^*,\phi^*)-\mathcal L(\pi_t,\phi_t)
\ge
\mu\Phi(\pi_t,\phi_t),$ taking expectations,
\[
\bar\Gap_t
\ge
\mu\Delta_t-c_K\eta_{\mathrm s,t}.
\]

\noindent
\textbf{Step 2. Convert the drift into a contraction.}
Substituting the lower bound on \(\bar\Gap_t\) into the drift inequality gives
\[
\begin{aligned}
\Delta_{t+1}
\le
\Delta_t-\eta_{\mathrm s,t}\{\mu\Delta_t-c_K\eta_{\mathrm s,t}\}
+
C_1\eta_{\mathrm s,t}^2=
(1-\mu\eta_{\mathrm s,t})\Delta_t
+
(C_1+c_K)\eta_{\mathrm s,t}^2.
\end{aligned}
\]
Let \(C'=C_1+c_K\). Then
\[
\Delta_{t+1}
\le
(1-\mu\eta_{\mathrm s,t})\Delta_t
+
C'\eta_{\mathrm s,t}^2.
\]

\noindent
\textbf{Step 3. Bound the policy distance.} Choose \(\eta_{\mathrm s,t}=2/\{\mu(t+t_0)\}\), where \(t_0\ge2\) is large enough so that the learning rates stay in the step size range used in the proof of Theorem~\ref{thm:global_convergence_generic}. Let
\[
\nu
\ge
\max\left\{
t_0\Delta_0,
\frac{4C'}{\mu^2}
\right\}.
\]
We prove by induction that \(\Delta_t\le\nu/(t+t_0)\) for every \(t\ge0\). The claim holds at \(t=0\) by the choice of \(\nu\). Suppose it holds at time \(t\), and set \(n=t+t_0\). Since \(n\ge2\),
\[
\begin{aligned}
\Delta_{t+1}
\le
\left(1-\frac{2}{n}\right)\frac{\nu}{n}
+
C'\frac{4}{\mu^2n^2} \le
\frac{\nu(n-2)}{n^2}
+
\frac{\nu}{n^2}
=
\frac{\nu(n-1)}{n^2}.
\end{aligned}
\]
Since \((n-1)/n^2\le1/(n+1)\), we obtain
\[
\Delta_{t+1}
\le
\frac{\nu}{n+1}
=
\frac{\nu}{t+1+t_0}.
\]
Thus \(\Delta_t=O(1/(t+t_0))\).

\noindent
\textbf{Step 4. Sum the performance gaps.}
Return to the drift inequality $\Delta_{t+1}
\le
\Delta_t-\eta_{\mathrm s,t}\bar\Gap_t+C_1\eta_{\mathrm s,t}^2.$ Rearranging, we have
\[
\bar\Gap_t
\le
\frac{\Delta_t-\Delta_{t+1}}{\eta_{\mathrm s,t}}
+
C_1\eta_{\mathrm s,t}.
\]
Using \(\eta_{\mathrm s,t}=2/\{\mu(t+t_0)\}\),
\[
\bar\Gap_t
\le
\frac{\mu(t+t_0)}{2}
(\Delta_t-\Delta_{t+1})
+
\frac{2C_1}{\mu(t+t_0)}.
\]
Summing over \(t=0,\ldots,T-1\),
\[
\begin{aligned}
\sum_{t=0}^{T-1}\bar\Gap_t
&\le
\frac{\mu}{2}
\sum_{t=0}^{T-1}
(t+t_0)(\Delta_t-\Delta_{t+1})
+
\frac{2C_1}{\mu}
\sum_{t=0}^{T-1}
\frac{1}{t+t_0}.
\end{aligned}
\]
The weighted telescoping sum satisfies
\[
\sum_{t=0}^{T-1}
(t+t_0)(\Delta_t-\Delta_{t+1})
=
t_0\Delta_0
+
\sum_{t=1}^{T-1}\Delta_t
-
(T-1+t_0)\Delta_T.
\]
Since \(\Delta_T\ge0\),
\[
\sum_{t=0}^{T-1}
(t+t_0)(\Delta_t-\Delta_{t+1})
\le
t_0\Delta_0
+
\sum_{t=1}^{T-1}\Delta_t.
\]
Using the bound from Step 3,
\[
\sum_{t=1}^{T-1}\Delta_t
\le
\nu
\sum_{t=1}^{T-1}
\frac{1}{t+t_0}.
\]
Therefore,
\[
\sum_{t=0}^{T-1}\bar\Gap_t
\le
\frac{\mu t_0}{2}\Delta_0
+
\left(
\frac{\mu\nu}{2}
+
\frac{2C_1}{\mu}
\right)
\sum_{t=0}^{T-1}
\frac{1}{t+t_0}.
\]
The harmonic sum is bounded by a logarithm. Since \(t_0\) is fixed with respect to \(T\), there is a finite constant \(C_2\), independent of \(T\), such that for every \(T\ge2\),
\[
\sum_{t=0}^{T-1}\bar\Gap_t
\le
\frac{C_2\log T}{\mu}.
\]
Dividing by \(T\) gives
\[
\frac{1}{T}
\sum_{t=0}^{T-1}
\E[\Gap_t]
\le
\frac{C_2\log T}{\mu T}.
\]
This proves the accelerated average-gap bound.
\end{proof}


\section{Used-Car Case Study Details}
\label{app:case_study_details}

This section reports implementation details for the used-car inventory--pricing case study in the main text.

\subsection{Simulation Environment Properties}

Table~\ref{tab:customer-attributes} summarizes the customer attributes used in the simulation model. Preferred class and budget capture heterogeneity in vehicle fit and affordability, while price sensitivity and urgency determine how customers respond to prices and time pressure. Seasonal adjustments allow the customer population to evolve over operating periods, making demand nonstationary within the simulation experiments.

\begin{table}[htbp]
\centering
\caption{Customer attributes in the demand model.}
\label{tab:customer-attributes}
\small
\begin{tabular}{p{3cm} p{6cm} p{6.5cm}}
\hline
\textbf{Attribute} & \textbf{Distribution / Definition} & \textbf{Interpretation} \\
\hline
Preferred class &
Categorical over budget, mid-range, and premium, with seasonally varying class shares &
Captures differences in customers' vehicle-class preferences.\\

Budget $b_i$ &
$b_i \sim \mathrm{Uniform}(B_c^{\min}, B_c^{\max})$, with class-specific ranges of \$8{,}000--\$16{,}000 for budget, \$15{,}000--\$28{,}000 for mid-range, and \$25{,}000--\$50{,}000 for premium &
Represents affordability constraints that vary by customer segment and vehicle class. \\

Price sensitivity $\beta_i$ &
Baseline sensitivity $\beta_i^{(0)} \sim \mathrm{Beta}(1.5,1.5)$, adjusted seasonally as
\[
\beta_i=\mathrm{clip}\!\left(\beta_i^{(0)}-0.15\sin(\vartheta_t),\,0,\,1\right)
\]
&
Controls how strongly purchase propensity declines as price rises above the reference level. Customers become less price-sensitive during peak-season periods and more price-sensitive during low-season periods. \\

Urgency $\zeta_i$ &
Baseline urgency $\zeta_i^{(0)} \sim \mathrm{Beta}(2,5)$, adjusted seasonally as
\[
\zeta_i=\mathrm{clip}\!\left(\zeta_i^{(0)}+0.10\sin(\vartheta_t),\,0,\,1\right)
\]
&
Captures time pressure or immediacy of need. Higher urgency increases purchase propensity at a given price. \\
\hline
\end{tabular}
\end{table}

We translate these attributes into purchase probabilities using a reduced-form binary logit specification, consistent with logit-based demand models in discrete-choice and inventory-pricing research \citep{PerezLopezOspina2022,MezaLatorreBonacicLopezOspinaPerez2024}. The utility index combines class fit, reference-price sensitivity, an asymmetric penalty when the posted price exceeds the customer's budget, and urgency. The reference-price component is motivated by prior research on reference-dependent choice \citep{CaputoLuskNayga2020,WangVeldmanTeunter2025}. The probability that customer $i$ purchases vehicle class $c$ at posted price $p_c$ is
\begin{equation}
\label{eq:purchase-prob}
\begin{aligned}
P(\mathrm{buy})
=
\sigma\Bigg(
    \alpha_0 + \alpha_{\mathrm{fit}}
    - \beta_{\mathrm{eff}}\frac{p_c-p_{\mathrm{ref}}}{p_{\mathrm{ref}}}
    - \gamma\max\!\left\{0,\frac{p_c-b_i}{p_{\mathrm{ref}}}\right\}
    + \delta \zeta_i
\Bigg),
\end{aligned}
\end{equation}
where $\sigma(\cdot)$ is the logistic sigmoid, $\alpha_0$ is a baseline intercept, $\alpha_{\mathrm{fit}}$ captures match quality between the customer's preferred class and the offered vehicle class, $\beta_{\mathrm{eff}}=\beta_i(1-\chi\zeta_i)$ adjusts price sensitivity downward as urgency increases, $\gamma$ penalizes prices exceeding the customer's budget $b_i$, $\delta$ captures the direct positive effect of urgency on purchase likelihood, and $p_{\mathrm{ref}}$ is the segment-specific reference price used to normalize deviations. These coefficients are modeling assumptions and are used across all policy comparisons. 

Equation~\eqref{eq:purchase-prob} captures four effects: a preference-match bonus when the offer aligns with the customer's preferred class \citep{PerezLopezOspina2022}; a reference-price penalty when the posted price exceeds $p_{\mathrm{ref}}$ \citep{CaputoLuskNayga2020,WangVeldmanTeunter2025}; a soft affordability penalty when the posted price exceeds $b_i$ \citep{MezaLatorreBonacicLopezOspinaPerez2024}; and an urgency effect that increases transaction propensity while compressing price sensitivity \citep{denBoerKeskin2022,XuDzeverZhao2023}. The urgency--sensitivity interaction is a reduced-form specification consistent with empirical evidence that time-pressured buyers are less price-responsive \citep{Ngo2024,SilalahiPhuongTedjakusumaEunikeRiantama2025}. We treat $\chi$ as a calibrated modeling parameter.

\subsection{Market Primitives and Customer Demand}
\label{app:used_car_demand}

The retailer sells three vehicle classes $\mathcal C=\{\mathrm{budget},\mathrm{mid},\mathrm{premium}\}$. Each class $c$ has acquisition cost $w_c$, per-period holding cost $h_c$, and admissible selling-price range $[p_c^{\min},p_c^{\max}]$. The baseline parameterization is summarized in Table~\ref{tab:vehicle_class_parameters}.

\begin{table}[htbp]
\centering
\caption{Vehicle class parameters used in the case study.}
\label{tab:vehicle_class_parameters}
\small
\begin{tabular}{@{}lccc@{}}
\toprule
\textbf{Class} & \textbf{Acquisition cost $w_c$}
& \textbf{Holding cost $h_c$ / period}
& \textbf{Price range $[p_c^{\min},p_c^{\max}]$} \\
\midrule
Budget    & \$8{,}000  & \$200 & [\$10{,}000, \$15{,}000] \\
Mid-range & \$15{,}000 & \$400 & [\$18{,}000, \$25{,}000] \\
Premium   & \$25{,}000 & \$600 & [\$30{,}000, \$40{,}000] \\
\bottomrule
\end{tabular}
\end{table}

\paragraph{Seasonal arrivals.}
Customer volume follows a seasonal arrival process with a 52-period cycle. For each class $c$, define
\[
m_{c,t}
=
\max\Bigl\{
\varepsilon,
1+a_1^c\sin(\vartheta_t)+b_1^c\cos(\vartheta_t)
+a_2^c\sin(2\vartheta_t)+b_2^c\cos(2\vartheta_t)
\Bigr\},
\qquad
\vartheta_t=\frac{2\pi t}{52},
\]
where $\varepsilon>0$ prevents degenerate zero-demand periods. Let $\pi_c^0$ be the baseline class-share vector. The aggregate seasonal multiplier is $\bar m_t=\sum_{c\in\mathcal C}\pi_c^0m_{c,t}$. Given a base arrival rate $\bar N$, the number of customers in period $t$ is $N_t=\max\{N_{\min},\lfloor \bar N\bar m_t\rceil\}$. Conditional on $N_t$, customer preferred classes are sampled with probabilities proportional to $\pi_c^0m_{c,t}$ and then normalized across classes.

\paragraph{Customer attributes.}
Each arriving customer $i$ has a preferred class $c_i$, budget $b_i$, price-sensitivity score $\beta_i$, and urgency score $\zeta_i$. Budgets are sampled from class-specific ranges:
\[
b_i\sim\mathrm{Uniform}(B_{c_i}^{\min},B_{c_i}^{\max}),
\]
with default ranges \$8{,}000--\$16{,}000 for budget customers, \$15{,}000--\$28{,}000 for mid-range customers, and \$25{,}000--\$50{,}000 for premium customers. Price sensitivity is generated from a Beta-distributed baseline score and mapped to a calibrated sensitivity range:
\[
\tilde\beta_i\sim\mathrm{Beta}(a_\beta,b_\beta),
\qquad
\beta_i = \beta_{\min} + (\beta_{\max}-\beta_{\min}) \mathrm{clip}\{\tilde\beta_i-0.15\sin(\vartheta_t),0,1\}.
\]
Urgency is generated as
\[
\tilde \zeta_i\sim\mathrm{Beta}(2,5), \qquad \zeta_i=\mathrm{clip}\{\tilde \zeta_i+0.10\sin(\vartheta_t),0,1\}.
\]
Thus, customers are less price-sensitive and more urgent during high-season periods, and more price-sensitive and less urgent during low-season periods.

\paragraph{Purchase probability.}
Purchase probabilities follow Equation~\eqref{eq:purchase-prob}. If the preferred class is unavailable, the customer may substitute toward available nearby classes, with stronger substitution between adjacent classes. The realized purchase is generated by comparing the purchase probability against an exogenous uniform random draw.

\subsection{Inventory Pipeline, Profit, and Rewards}
\label{app:used_car_inventory_rewards}

\paragraph{Inventory dynamics.}
At the beginning of period $t$, the replenishment policy selects a class-specific target inventory position $S_{c,t}$. The inventory position is
\[
\mathrm{IP}_{c,t}=I_{c,t}^{\mathrm{oh}}+I_{c,t}^{\mathrm{oo}},
\]
where $I_{c,t}^{\mathrm{oh}}$ is on-hand inventory and $I_{c,t}^{\mathrm{oo}}$ is in-transit inventory. The order quantity is
\[
q_{c,t}=\max\{0,S_{c,t}-\mathrm{IP}_{c,t}\}.
\]
Orders enter a replenishment pipeline and arrive after lead time $L$. Until delivery, in-transit inventory is observed by the long-term policy through $\mathrm{IP}_{c,t}$ but is not available for sale.

Within each period, events unfold in the following order:
\begin{enumerate}[leftmargin=*,nosep]
\item Orders scheduled to arrive in period $t$ are added to on-hand inventory.
\item The long-term replenishment policy observes the period-level state and selects targets $S_{c,t}$.
\item Replenishment orders $q_{c,t}$ enter the pipeline.
\item Customers arrive sequentially. For each arrival, the short-term pricing policy posts prices, the demand model determines whether a sale occurs, and completed sales deplete on-hand inventory.
\item Holding costs, ordering costs, and lost-sale penalties are assessed at the end of the period.
\end{enumerate}
This sequence makes the long-term policy commit to replenishment before observing realized demand in the current period. The long-term policy therefore learns to anticipate near-term demand from lagged signals such as prior sales, conversion rates, and the seasonal demand indicator, rather than reacting to current-period demand directly.

\paragraph{Profit.}
Let $\mathcal S_{c,t}$ be the set of sold class-$c$ vehicles in period $t$, and let $p_i$ be the transaction price of sold vehicle $i$. Period profit is
\[
\Pi_t
=
\sum_{c\in\mathcal C}\sum_{i\in\mathcal S_{c,t}}(p_i-w_c)
-
\sum_{c\in\mathcal C}h_c I^+_{c,t}
-
F\mathbf 1\!\left\{\sum_{c\in\mathcal C}q_{c,t}>0\right\}
-
\sum_{c\in\mathcal C}\lambda_c\ell_{c,t}.
\]
Here $I^+_{c,t}$ is post-sales on-hand inventory, $F$ is the fixed ordering cost incurred whenever at least one order is placed, and $\lambda_c\ell_{c,t}$ is the lost-sale penalty for unserved class-$c$ demand. Acquisition cost is charged when a vehicle is sold rather than when it is ordered, which keeps realized period profit aligned with realized sales and avoids large procurement outflows dominating the learning signal before the corresponding inventory can generate revenue.

\paragraph{Long-term-policy reward.}
The long-term policy receives normalized period profit, 
$r_t^{\mathrm{LT}}=\Pi_t/\kappa$, where $\kappa$ is a numerical scaling constant. The normalization is used only for PPO training stability. All reported results use unnormalized dollar-denominated profit $\Pi_t$.

\paragraph{Short-term-policy reward.}
The short-term policy acts at the customer-arrival timescale. Its dense shaping reward for customer interaction $n$ is
\[
r_n^{\mathrm{ST}}
=
(p_c-w_c)\mathbf 1\{\mathrm{sale}\}
-
\lambda_I\sum_{c\in\mathcal C}\frac{h_c}{N_t}I_c
-
\lambda_{\mathrm{lost}}\mathbf 1\{\mathrm{stockout}\}.
\]

The first term rewards realized margin, the second amortizes period holding cost into a customer-arrival inventory-pressure signal, and the third penalizes failed sales when the relevant class is unavailable. This reward is not added to the long-term reward. It is a dense shaping signal used to guide within-period pricing, while the reported economic outcome remains period profit $\Pi_t$.

\subsection{Factorial Design and Policy Contrasts}
\label{sec:supp-factorial}

Using the four policies, we interpret performance differences through a $2\times2$ design over the replenishment and pricing layers. This design links each comparison to the value of RL pricing, the value of RL replenishment, and the additional value created when both layers are learned jointly.

Let \(P(\cdot)\) denote mean profit over a given analysis window. 
The average gain from RL pricing, averaging over the two replenishment policies, is
\[
\Delta_{\mathrm{pricing}}
=
\frac{1}{2}
\left[
P(\mathrm{OUL{+}RL})-P(\mathrm{OUL{+}Fixed})
+
P(\mathrm{HRL})-P(\mathrm{RL{+}Fixed})
\right].
\]

The average gain from RL replenishment, averaging over the two pricing policies, is
\[
\Delta_{\mathrm{replen}}
=
\frac{1}{2}
\left[
P(\mathrm{RL{+}Fixed})-P(\mathrm{OUL{+}Fixed})
+
P(\mathrm{HRL})-P(\mathrm{OUL{+}RL})
\right].
\]

The interaction contrast is
\[
\Delta_{\mathrm{coord}}
=
P(\mathrm{HRL})
-P(\mathrm{OUL{+}RL})
-P(\mathrm{RL{+}Fixed})
+P(\mathrm{OUL{+}Fixed}).
\]
A positive \(\Delta_{\mathrm{coord}}\) indicates that the gain from adapting one layer is larger when the other layer is also adaptive.

\subsection{Shock Schedules and Common Random Numbers}
\label{app:used_car_shocks_crn}

The experiments combine two disruption channels, supply and demand. Each channel can be inactive or active, giving the four shock settings in the main paper: no shocks, supply-only shocks, demand-only shocks, and joint demand--supply shocks.

\paragraph{Supply shocks.}
During a supply shock, fulfilled orders are reduced before entering the pipeline. In the fractional-cap version,
\[
q_{c,t}^{\mathrm{ful}}=\left\lfloor f_{c,t}q_{c,t}\right\rfloor,
\qquad
f_{c,t}\in(0,1].
\]
The symmetric version uses a shared fulfillment fraction across classes. The asymmetric version assigns class-specific fulfillment fractions, for example from a fixed set such as $\{0.1,0.5,0.8\}$, and permutes the assignment across shock events. This creates uneven scarcity across vehicle classes. Supply shocks may also increase lead times by drawing realized lead times from a wider shock-specific range.

\paragraph{Demand shocks.}
During a demand shock, the customer count in period $t$ is multiplied by an event-specific demand factor $m_{d,t}$:
\[
N_t^{\mathrm{shock}}=\max\{N_{\min},\lfloor m_{d,t}N_t\rceil\}.
\]
Values $m_{d,t}>1$ represent demand surges, and values $m_{d,t}<1$ represent demand contractions. Demand shocks also reweight the class mix of arrivals, so different events emphasize different vehicle segments.

\paragraph{Timing and observability.}
Shocks are exogenous and are not observed before they occur. Once a shock is realized, its effects become visible through the operating state. Supply shocks appear through realized fulfillment, pipeline delays, and inventory availability. Demand shocks appear through realized customer volume, class mix, and conversion behavior. Thus, the first affected period cannot be anticipated mechanically, but later decisions can adapt to the realized operating condition.

\paragraph{Common random numbers.}
All policies are evaluated using common random numbers. For each seed, the simulator generates the same customer arrivals, customer attributes, purchase draws, shock realizations, and fulfillment randomness across all four policies. The only difference is the policy. This paired design reduces sampling noise and supports within-seed comparisons of period profit, disruption response, and coordination effects.

\subsection{Policies and Training Protocol}
\label{app:used_car_policy_protocol}

\paragraph{OUL+Fixed.}
The non-learning benchmark combines an order-up-to-level replenishment rule with fixed-markup pricing. For each class $c$, the OUL target is computed from estimated period demand, lead time, and a target service level:
\[
S_{c,t}^{\mathrm{OUL}}
=
\widehat D_{c,t}(R+\widehat L)+z_\alpha\widehat\sigma_{c,t}\sqrt{R+\widehat L},
\]
where $R$ is the review period, $\widehat D_{c,t}$ is a moving-average estimate of demand per period, $\widehat\sigma_{c,t}$ is the corresponding demand-variability estimate, and $z_\alpha$ is the safety factor for the target service level. Fixed prices are set at a 30\% markup over acquisition cost and clipped to the feasible price range.

\paragraph{OUL+RL.}
OUL+RL keeps the OUL replenishment rule but replaces fixed pricing with the short-term PPO policy. It identifies the value of adaptive customer-arrival pricing when replenishment remains rule-based.

\paragraph{RL+Fixed.}
RL+Fixed replaces the OUL replenishment rule with the long-term PPO policy but keeps fixed-markup pricing. It identifies the value of adaptive replenishment when pricing is fixed.

\paragraph{HRL.}
HRL learns both layers. The long-term policy selects period-level base-stock targets $S_{c,t}$. The short-term policy observes the current inventory state and arriving-customer attributes and sets class-level prices within the feasible price ranges. Assortment is derived deterministically from available inventory and posted margins. In-stock classes are ranked by $p_c-w_c$, and the highest-ranked feasible offers are shown to the customer. This keeps the short-term action as pricing while allowing inventory scarcity to affect the offered set.

\paragraph{Evaluation windows.}
Each scenario is evaluated over 30 independent replications. Experiment~1 uses 3{,}000 operating-period runs for no shocks, demand-only shocks, supply-only shocks, and joint shocks. Experiment~2 uses 5{,}500 operating-period runs and introduces a prolonged high-intensity disruption cluster over periods 3{,}000--3{,}300. Unless stated otherwise, reported economic comparisons use the post-initialization window beginning at period 450.

\paragraph{Training stabilization.}
The HRL training schedule uses an initialization phase before the main evaluation window. The first 350 periods stabilize the short-term pricing layer. The next 100 periods allow the long-term layer to adapt against the stabilized short-term policy. Full joint learning begins at period 450. The same post-initialization convention is used for the main economic comparisons across policies, so reported results focus on substantive operating behavior rather than early training transients.

\paragraph{Statistical reporting.}
Tables report seed-level means with 95\% confidence intervals unless otherwise specified. Pairwise comparisons are based on within-seed differences, using the common-random-number structure. Time-series plots report rolling means across replications, with confidence bands summarizing cross-seed variation. The coordination contrast reported in the main text is
\[
\Delta_{\mathrm{coord}}
=
P(\mathrm{HRL})-P(\mathrm{OUL{+}RL})-P(\mathrm{RL{+}Fixed})+P(\mathrm{OUL{+}Fixed}),
\]
computed over the relevant full-horizon, shock, or recovery window.

\section{Additional Numerical Illustration}
\label{sec:additional_numerical}

This section presents two controlled numerical illustrations of the convergence mechanisms in Theorems~\ref{thm:global_convergence_generic} and~\ref{thm:rate_improve_average}. The first considers baseline convergence under synchronized updates in a bounded two-timescale environment. The second considers faster learning when poor decisions produce more visible objective losses. These experiments clarify the mechanisms behind the bounds rather than estimate hidden constants or verify asymptotic rates.

\subsection{Environments}
\label{app:exact_env_controlled}

Both environments have finite long-term and short-term action spaces, a fixed within-period horizon (K=20), and use the synchronized finite-state population policy-space update analyzed in the main paper. The long-term state space and both action spaces are \({-2,-1,0,1,2}\). The projection operator \(\Pi\) maps a scalar to the nearest element of this set.

For panel~(a), which illustrates Theorem~\ref{thm:global_convergence_generic}, the environment is a bounded two-timescale system without additional sharpness. The short-term reward is
\[
r(x,u,a)=-0.20\{a-(0.7x+0.5u)\}^2,
\]
the long-term cost is \(c(x,u)=0.40(u-0.5x)^2\), and the next long-term state is
\[
x^{+}=\Pi\{\mathrm{round}(0.6x+0.4\bar a+\varepsilon)\},
\]
where \(\bar a\) is the average within-period short-term action and \(\varepsilon\in\{-1,0,1\}\) has probabilities \(0.2,0.6,0.2\).

For panel~(b), which illustrates Theorem~\ref{thm:rate_improve_average}, the reward and cost create a sharper quadratic landscape around the optimal long-term commitment. Writing \(m(x)=x\), the short-term reward is
\[
r(x,u,a)=-1.30\{a-m(x)\}^2-0.90(a-u)^2,
\]
the long-term cost is \(c(x,u)=1.10\{u-m(x)\}^2\), and the next long-term state is
\[
x^{+}=\Pi\{\mathrm{round}(0.5x+0.5\bar a+\varepsilon)\},
\]
where \(\varepsilon\in\{-1,0,1\}\) has probabilities \(0.1,0.8,0.1\).

\subsection{Oracle Benchmark and Reported Metric}
\label{app:oracle_metric_controlled}

Because the environments are finite and fully specified, the oracle benchmark is computed exactly by finite-state dynamic programming \citep{puterman2014markov}. The oracle is the optimal stationary joint policy \((\pi^*,\phi^*)\). At the long-term level, it chooses the maximizing long-term action. Within each long-term period, it chooses the maximizing short-term action conditional on the long-term action and the remaining continuation value.

The learned policy is the policy-space PPO-style update used in the convergence analysis. At each long-term period \(t\), we evaluate the post-short-term-update pair \((\pi_t,\phi_{t+1})\) using
\[
\mathcal G_t^{\mathrm{num}}
=
\mathcal L(\pi^*,\phi^*)-\mathcal L(\pi_t,\phi_{t+1}).
\]
Figure~1 reports the running average \(t^{-1}\sum_{\tau=0}^{t-1}\mathcal G_{\tau}^{\mathrm{num}}\), averaged over 30 random initializations. Evaluation is exact in the finite environments, and across-initialization variation is negligible at the plotted scale, so confidence bands are omitted for readability.

\subsection{Results}

\begin{figure*}[!htpb]
\centering
\begin{minipage}[t]{0.43\textwidth}
\centering
\includegraphics[width=\textwidth]{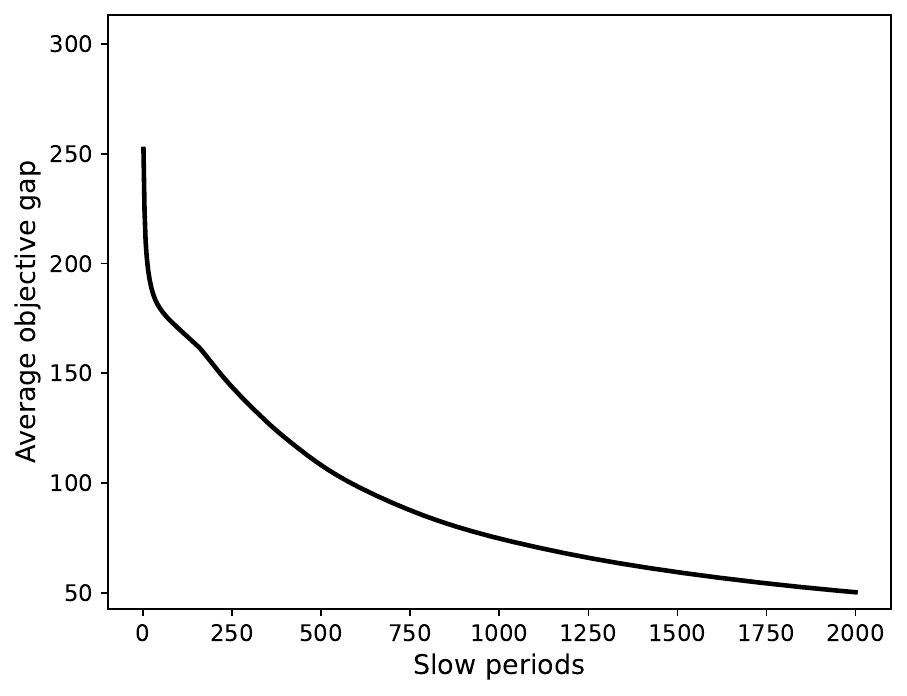}

{\small (a) Theorem~\ref{thm:global_convergence_generic}}

\end{minipage}
\hfill
\begin{minipage}[t]{0.43\textwidth}
\centering
\includegraphics[width=\textwidth]{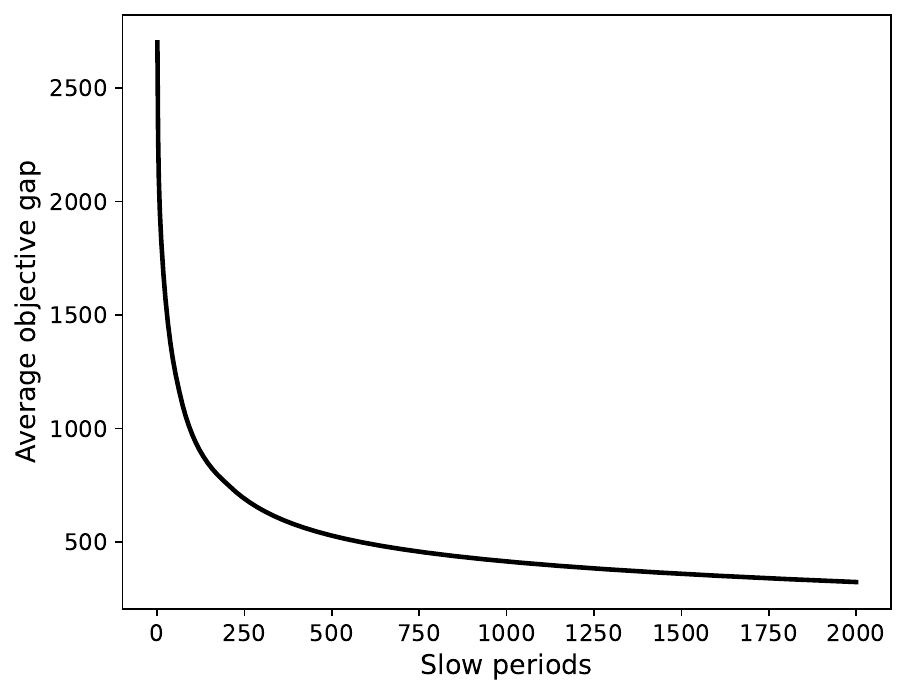}

{\small (b) Theorem~\ref{thm:rate_improve_average}}

\end{minipage}
\caption{Controlled numerical illustrations of Theorems~\ref{thm:global_convergence_generic} and~\ref{thm:rate_improve_average}. Each panel reports the running average post-short-term-update optimality gap, averaged over 30 random initializations.}
\label{fig:theory_controlled_simulations}
\end{figure*}

Figure~\ref{fig:theory_controlled_simulations} reports the running-average gap in the two environments. Panel~(a) uses a bounded two-timescale environment without additional sharpness and shows a steadily decreasing running-average gap, consistent with the baseline mechanism in Theorem~\ref{thm:global_convergence_generic}. Panel~(b) uses a sharper reward-and-cost structure in which poor decisions produce more visible objective losses. Its larger initial gap reflects the stronger penalty for suboptimal decisions rather than better absolute performance. The resulting clearer learning signal supports faster correction, and the running-average gap decreases more rapidly. The panels illustrate the mechanisms underlying Theorems~\ref{thm:global_convergence_generic} and~\ref{thm:rate_improve_average}; they do not estimate hidden constants or verify asymptotic rates.
}

\end{document}